\newcommand{\upquotetype}{}
\newcommand{\upquote@aux}[1]{\text{\upquotetype}#1\text{\upquotetype}}
\newcommand{\upquotesingle}{\renewcommand{\upquotetype}{\textquotesingle}\upquote@aux}
\newcommand{\upquotedouble}{\renewcommand{\upquotetype}{\textquotedbl}\upquote@aux}
\newtheorem{theorem}{Theorem}
\newcommand{\argmin}{\operatornamewithlimits{argmin}}
\newcommand{\argmax}{\operatornamewithlimits{argmax}}
\title{Large Scale Behavioral Analytics via Topical Interaction}
\author{
  Shih-Chieh Su\thanks{Connect with the author: \url{https://www.linkedin.com/in/jessysu} } \\
  Information Security and Risk Management Department\\
  Qualcomm Inc.\\
  San Diego, CA, 92121 \\
  \texttt{shihchie@qualcomm.com} \\
}
\begin{document}

\maketitle

\begin{abstract}
We propose the split-diffuse (SD) algorithm that takes the output of an existing dimension reduction algorithm, and distributes the data points uniformly across the visualization space. The result, called the topic grids, is a set of grids on various topics which are generated from the free-form text content of any domain of interest. The topic grids efficiently utilizes the visualization space to provide visual summaries for massive data. Topical analysis, comparison and interaction can be performed on the topic grids in a more perceivable way. 
\end{abstract}

\section{Introduction}\label{sec-intro}
When there are multiple measures of the each sample, the data is described in the a high dimensional space $\mathcal{H}$ by these measures. For example, the samples collected by a network of sensors have the dimensionality of the number of sensors. Each stock symbol can be described by multiple quantifiers which make up the data space. Documents of a corpus can be described by the vector of word counts (``word vector of features''), which can be all or most of the words appearing in the corpus. On these cases, it is possible that $\mathcal{H}$ has tens of thousands dimensions.

To make these high dimensional data points visible to human, a word embedding (or dimension reduction) technique is employed to map the data points to a lower dimensional space $\mathcal{L}$. Usually $\mathcal{L}$ is a two-dimensional (2D) or three-dimensional (3D) space. The word embedding technique attempts to preserve some relationship among the data points in $\mathcal{H}$ after mapping them to $\mathcal{L}$. 

For example, the principal component analysis (PCA) \cite{Pearson01} attempts to combine the dimensions of $\mathcal{H}$ to form the dimensions in $\mathcal{L}$ that best explain the variance in the data. However, the directions maximizing the variance do not always maximize the information, or optimize some other desirable metric. In some case, one would prefer the inter-point relationship in $\mathcal{H}$ being maintained in $\mathcal{L}$. The multi-dimensional scaling (MDS) \cite{Torgerson52} $\mathbb{M}$ tries to preserve the distance between data points $\{p\}$, during the mapping from $\mathcal{H}$ to $\mathcal{L}$, so that
\begin{equation}
\mathbb{M} = \argmin_{\forall M: p_{(\mathcal{H})} \rightarrow p_{(\mathcal{L})}} \sum_{\forall i,j} \left( \frac{||p_i-p_j||_{(\mathcal{H})} - ||p_i-p_j||_{(\mathcal{L})}} {||p_i-p_j||_{(\mathcal{H})}} \right) ^ 2
\end{equation}

The stochastic neighbor embedding (SNE) \cite{Hinton02} type of algorithms further emphasize the local distances. In a broader sense, SNE algorithms try to preserve the local relationship ahead of preserving the global relationship. There are other techniques emphasizing different favored metrics over the relationship among data points. Some popular choices include isomap \cite{Tenenbaum00}, spectral embedding \cite{Belkin03}, and totally random trees embedding \cite{Geurts06}). On a specific situation, one particular dimension reduction technique could be more suitable than others. 

In this paper, we do not discuss the use cases for different dimension reduction algorithms. Instead, we focus on the techniques in the 2D or 3D visual space to help human perceive and interact with the rendered information more easily.

\begin{figure*}[htp]
  \centering
  \subfloat[first-level split]{\includegraphics[scale=0.19]{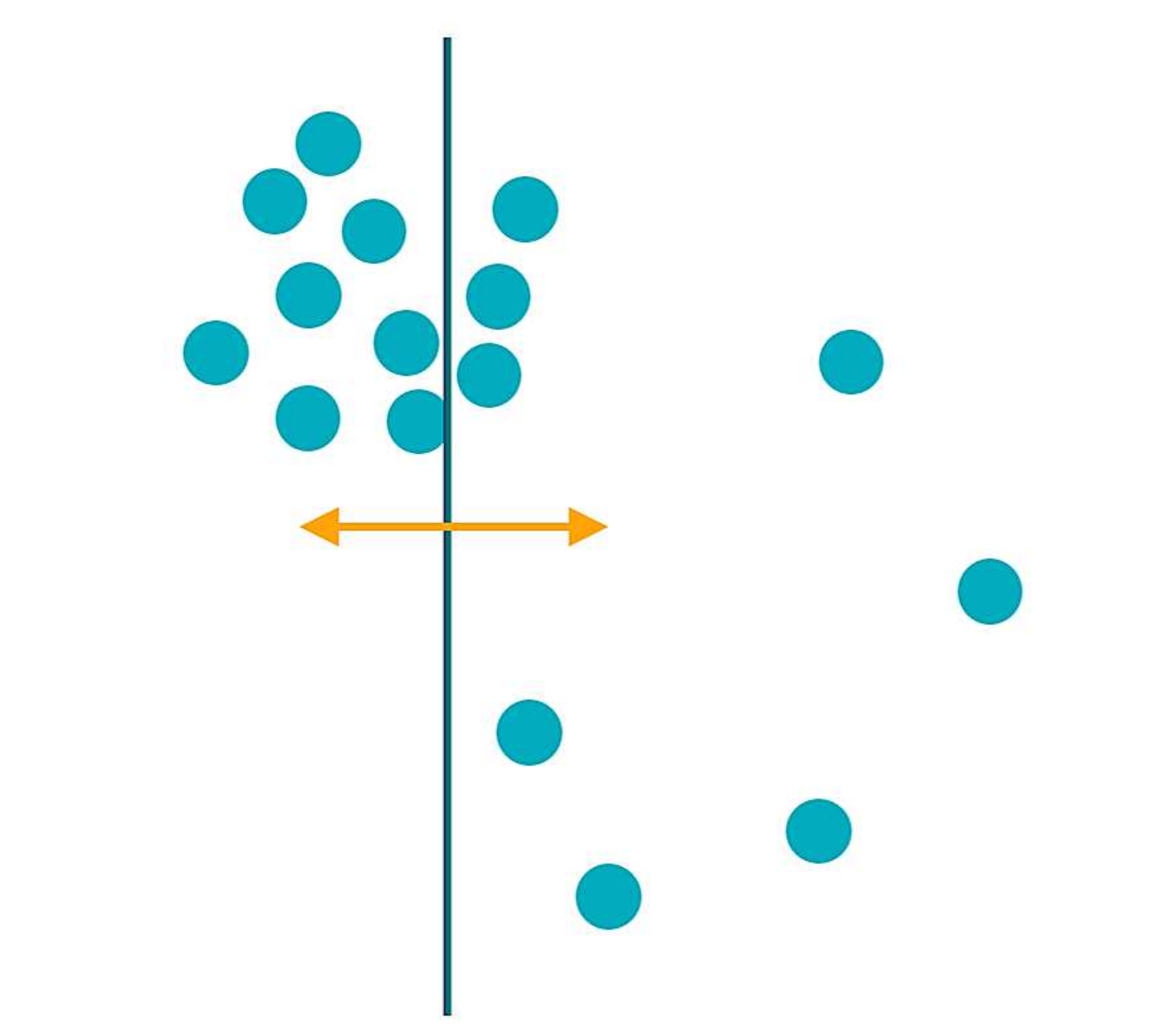}} \quad
  \subfloat[second-level splits]{\includegraphics[scale=0.19]{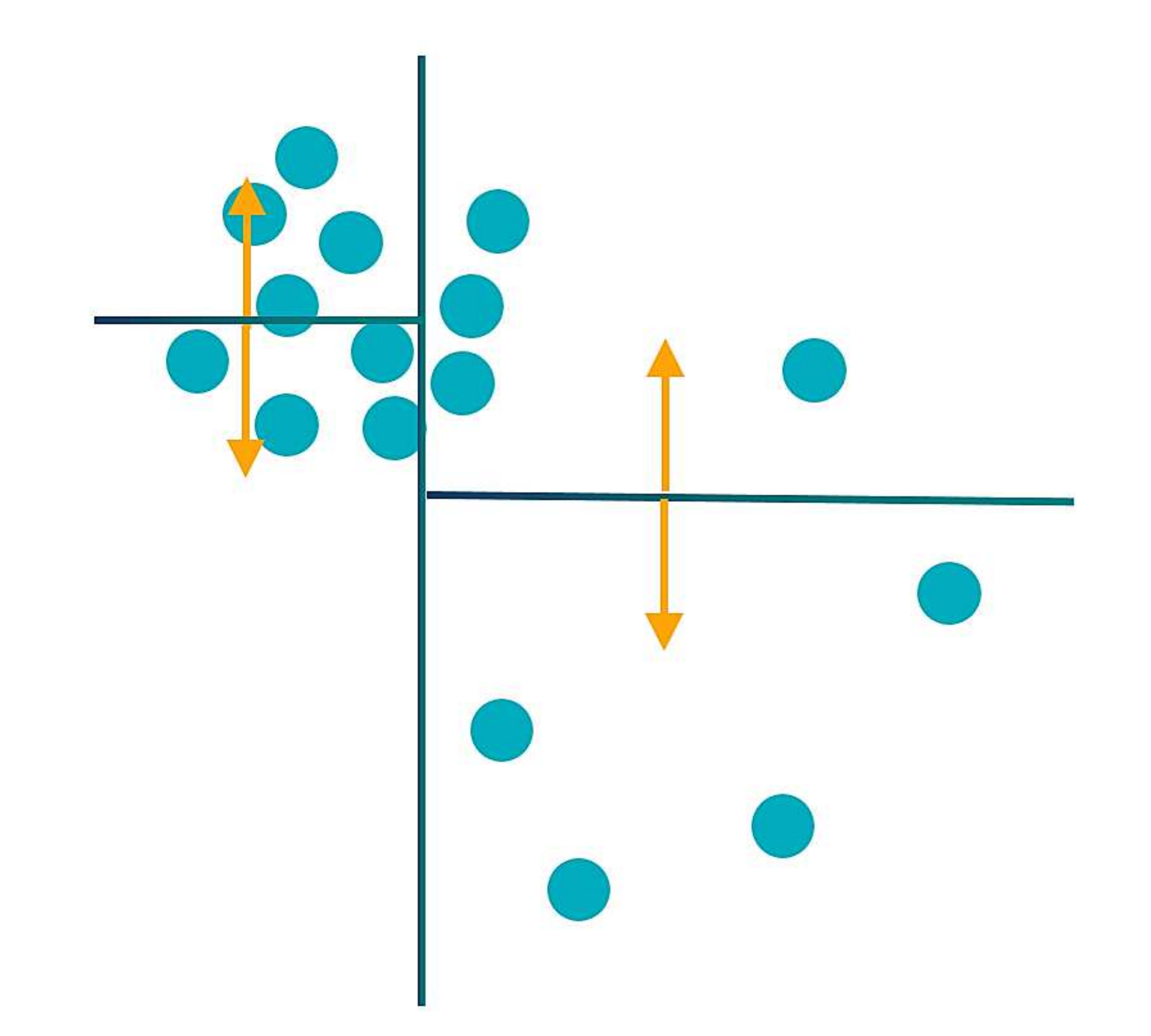}} \quad
  \subfloat[sample splitting path]{\includegraphics[scale=0.19]{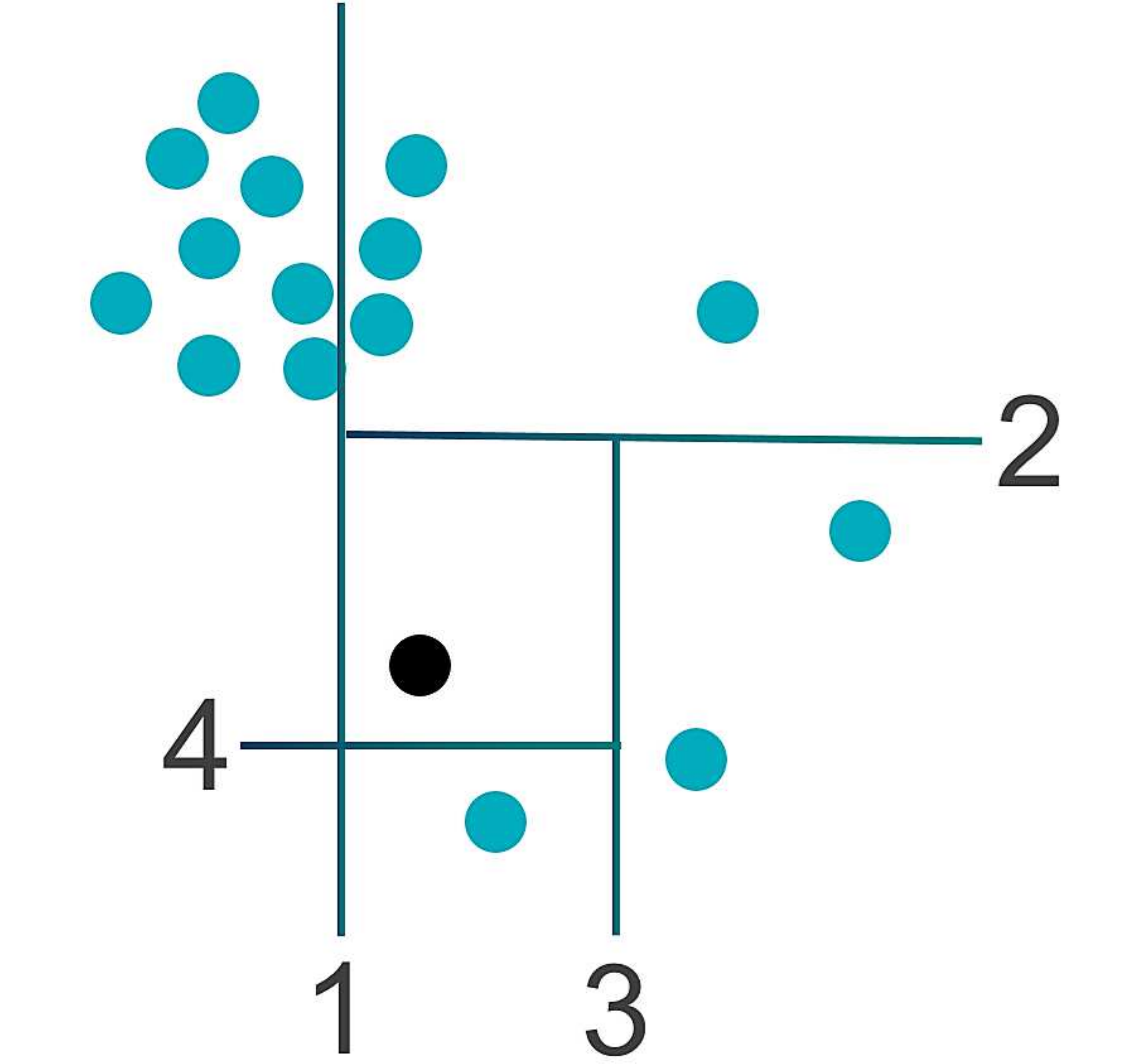}}
\caption{The split-diffuse algorithm. (a) and (b) show splits on a set of sample points over the $4\times4$ layout. (c) illustrates the path leading to the final placemant of the block point.}
\label{fig-SD}
\end{figure*}

\section{Background}\label{sec-background}

Consider the case to visualize the traffic drift of a network node, over two periods of time. There could be hundreds of million entries in the activity log during each time period. Rather than showing these entries line-by-line, a common approach is to classify the entries into topics, based on the entry content. Then we can compare the topical metrics across different time periods. Some example metrics can be entry counts, total file size, total packet size, counts in source or destination entities.

Similarly, one may be interested in comparing the shipping patterns on millions of shipments to the New York City against that to the Los Angeles City. Or compare the reimbursement patterns before and after a policy change of a healthcare insurance plan. In these cases, showing the massive amount of entries directly in the charts can be overwhelming and difficult to make comparison.

Instead, we want a visual summary for the behavioral content. The same summarization and visualization method can be performed on different metrics, on different targets, or over different period of time. Thus the human expert can compare the behavior, and can interact with the differences.

We start from the log of activities. Each entry has the identifier of the entity that incurred the activity, the timestamp, and the activity detail. The activity detail can be any free-form text and is properly punctuated. We collect all these entries to form a corpus. Each entry (document) becomes a data point in the high dimensional word vector space $\mathcal{H}$ defined by the corpus. Topics are generated in $\mathcal{H}$ to summarize the activities, using techniques like the latent Dirichlet allocation (LDA) model \cite{Blei03}. These topics, in the form of data points, are than mapped to $\mathcal{L}$ with the existing dimension algorithm of choice.

The output from existing dimension reduction algorithm is a set of data points that are non-uniformly scattered around the visualization space. This helps to explain the clustering behavior, including inter-cluster and intra-cluster, among the data points. However, there are also some drawbacks:
\begin{enumerate}
  \item The clusters tangle with others; some data points overlap with others. Overlap makes the information less perceivable.
  \item The data points are denser in some area. The heterogeneity makes human interaction with the data points more difficult.
  \item When comparing different metrics over the same space, the geometrical relationship between the data points tends to outshine the comparison between the metrics.
\end{enumerate}

Moreover, there is always area where points are sparse. The white space that is not occupied by any points does not provide any extra information other than having no data point. We cannot interact with the white space the way we interact with the topic data points. In our case, we want to know the nearby topics of a specific topic, but the distance accuracy can be traded for the ease of topical comparison and topical interaction.

In order to better utilize the visualization space, we suggest to distribute the data points evenly over the visualization space. The cloud of data points is deformed in the same space $\mathcal{L}$ defined by the dimension reduction algorithm. This deformation is denoted as $\mathbb{S}$. In the meanwhile, it is desirable to preserve the point-wise relationship maintained by the dimension reduction algorithm. Our strategy in approaching this goal is prioritized as follows:
\begin{enumerate}
  \item Points are equally spaced.
  \item Point-wise topology is preserved.
  \begin{enumerate}
    \item When the point $p_i$ is to the right of point $p_j$ in space $\mathcal{L}$, $\mathbb{S}$ attempts to make $\mathbb{S}(p_i)$ to the right of $\mathbb{S}(p_j)$ in the same space $\mathcal{L}$, for all $i,j$, within all dimensions of $\mathcal{L}$.
  \end{enumerate}
  \item Point-wise geometry is loosely followed.
  \begin{enumerate}
    \item When $p_i$ is far from $p_j$, $\mathbb{S}(p_i)$ is far from $\mathbb{S}(p_j)$.
    \item When $\{p_i\}$ forms a cluster, $\{\mathbb{S}(p_i)\}$ still forms a cluster (the definition on cluster may vary over different situations).
  \end{enumerate}
\end{enumerate}

\begin{figure*}[htp]
  \centering
  \subfloat[2D MDS output]{\includegraphics[scale=0.22]{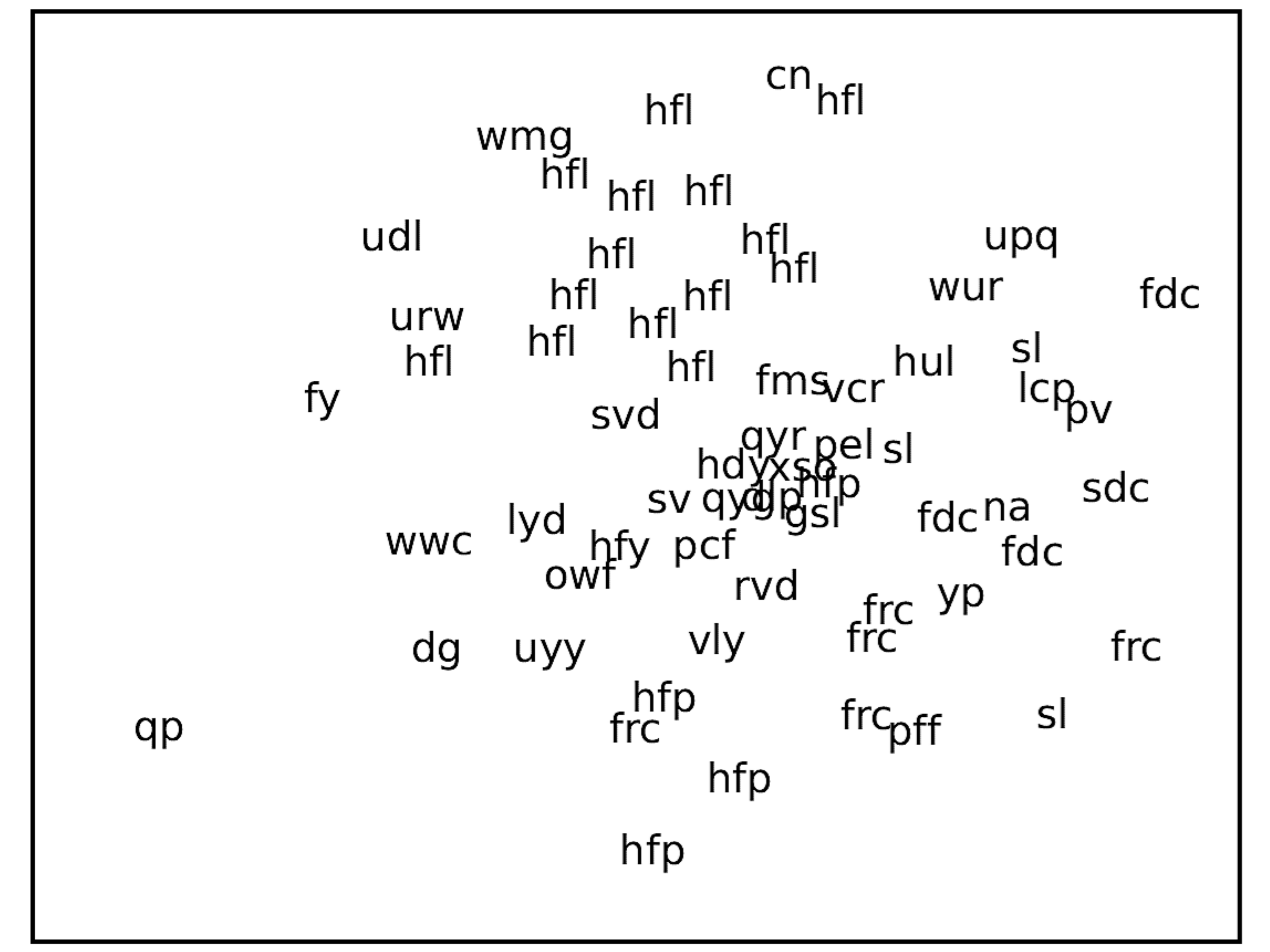}} 
  \subfloat[SD output from (a)]{\includegraphics[scale=0.22]{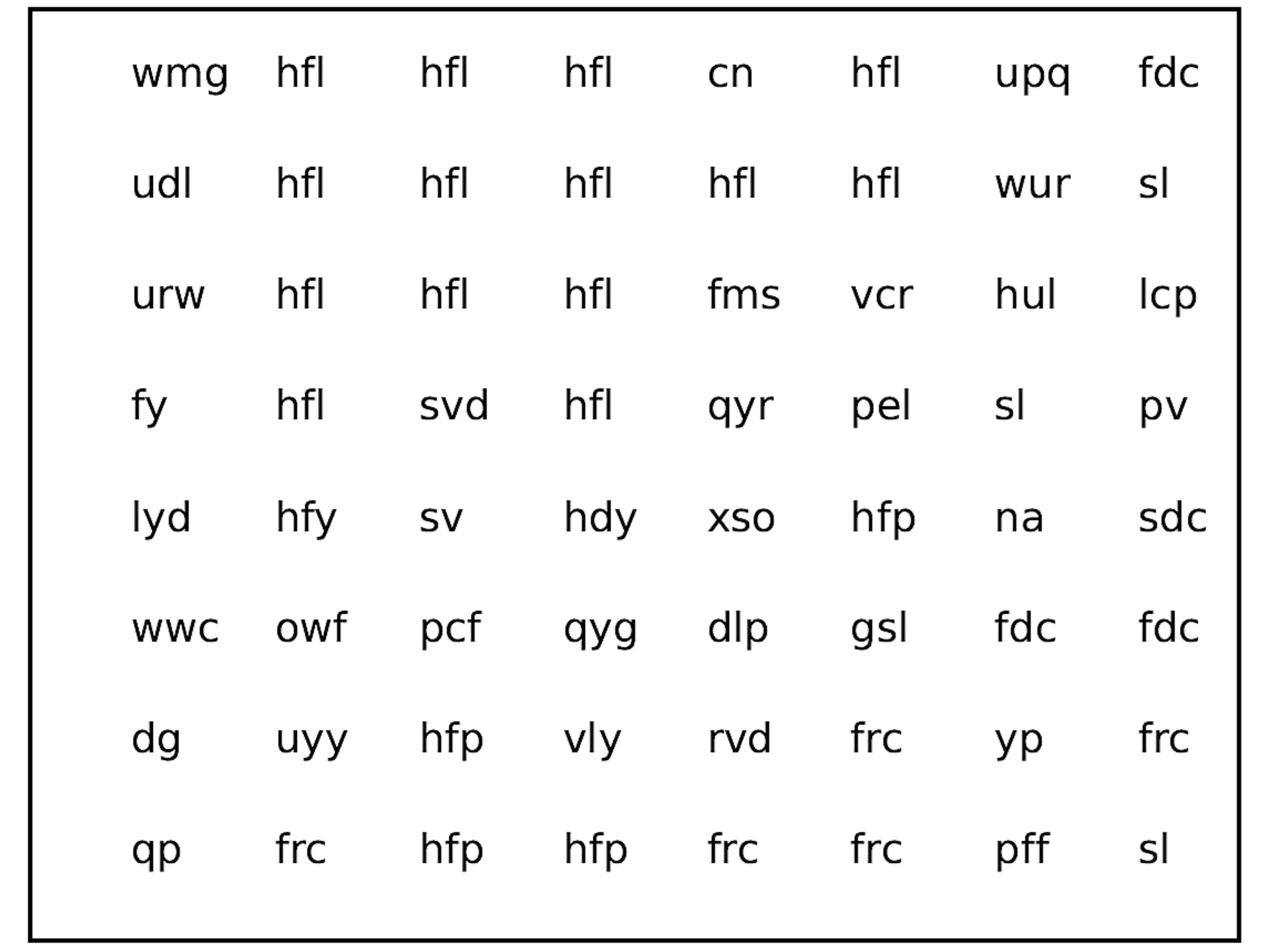}}
  \\
  \subfloat[2D t-SNE output]{\includegraphics[scale=0.22]{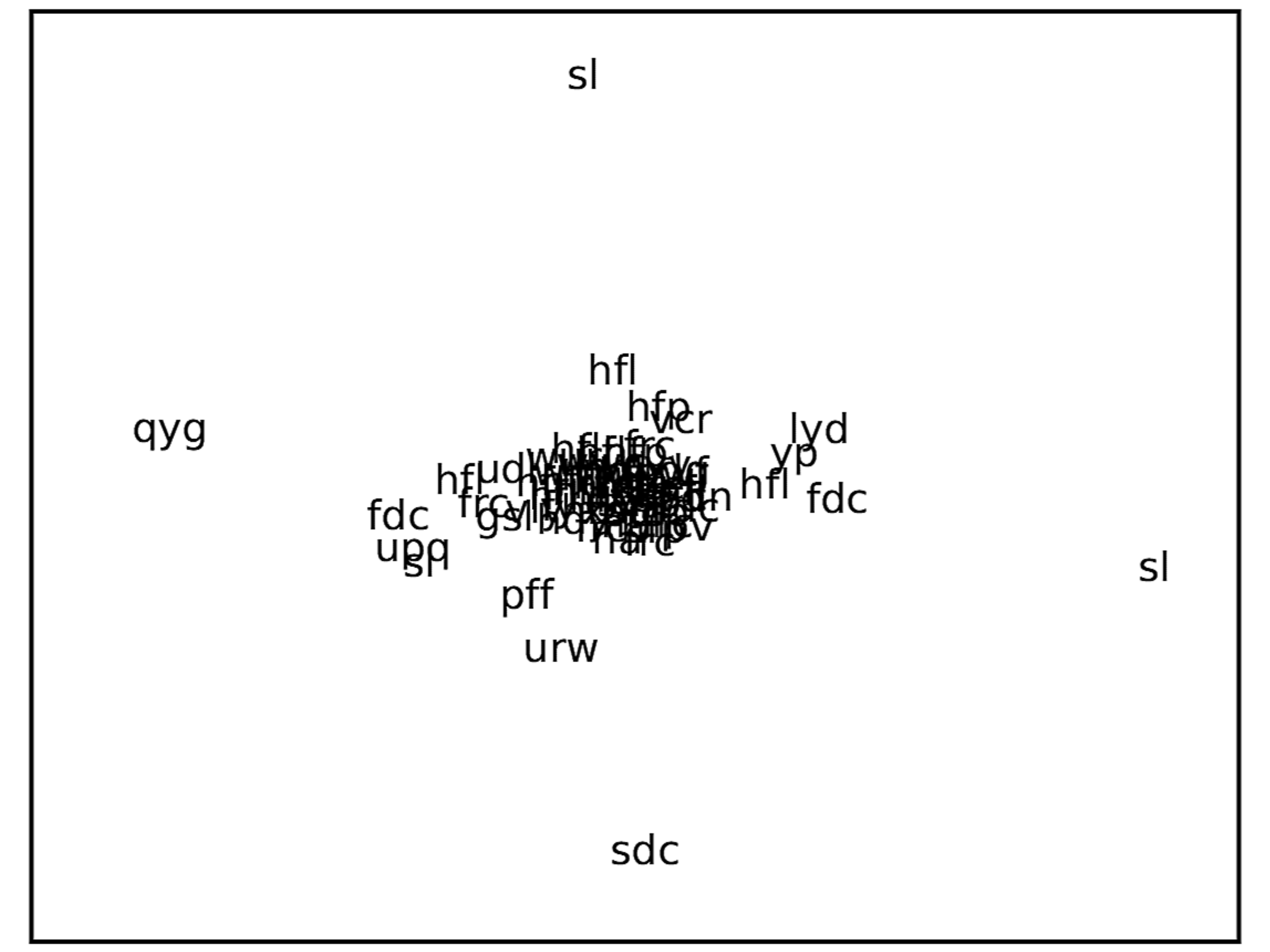}}
  \subfloat[SD output from (c)]{\includegraphics[scale=0.22]{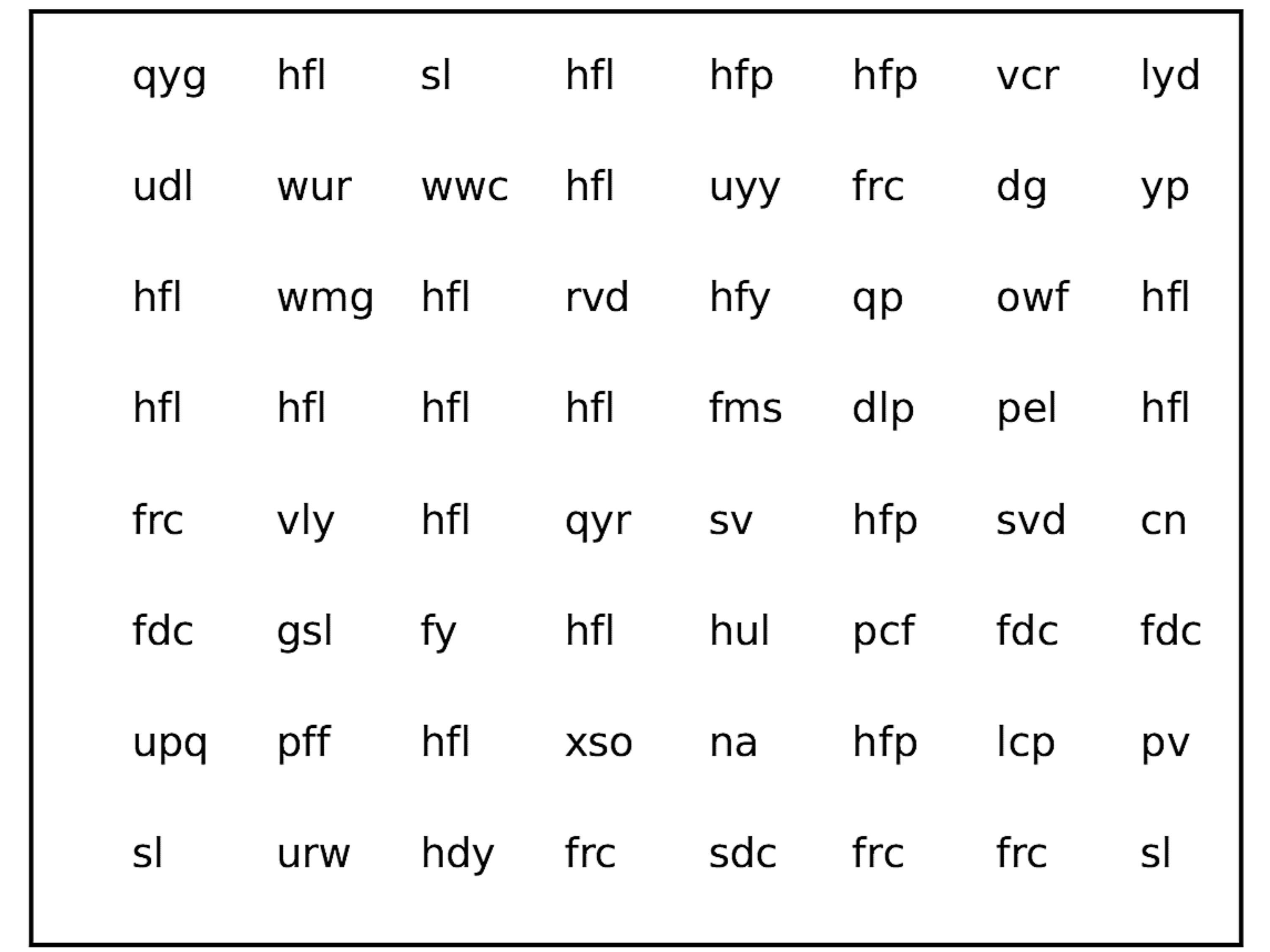}}
  \\
  \subfloat[3D MDS output]{\includegraphics[scale=0.22]{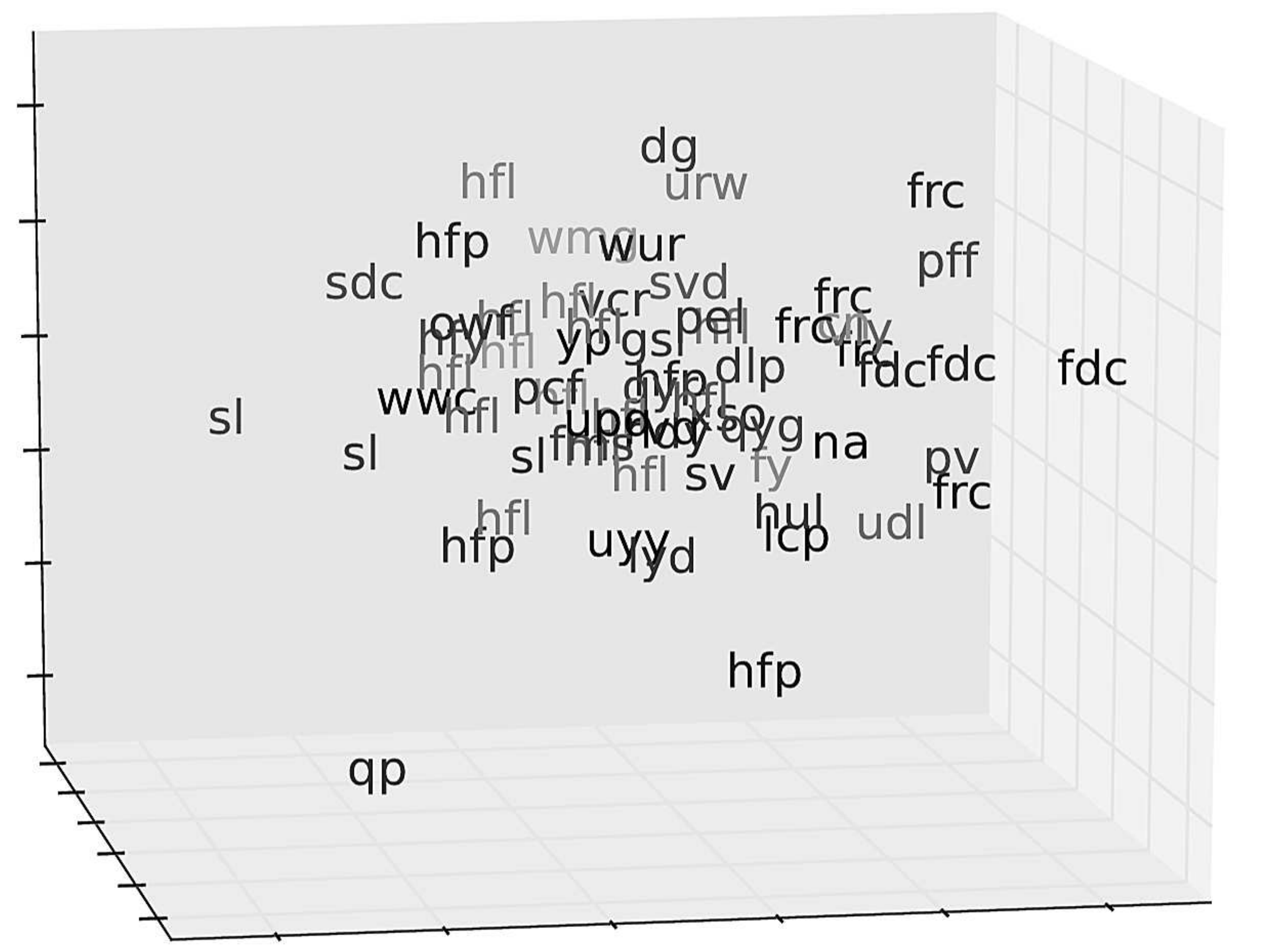}}
  \subfloat[SD output from (e)]{\includegraphics[scale=0.22]{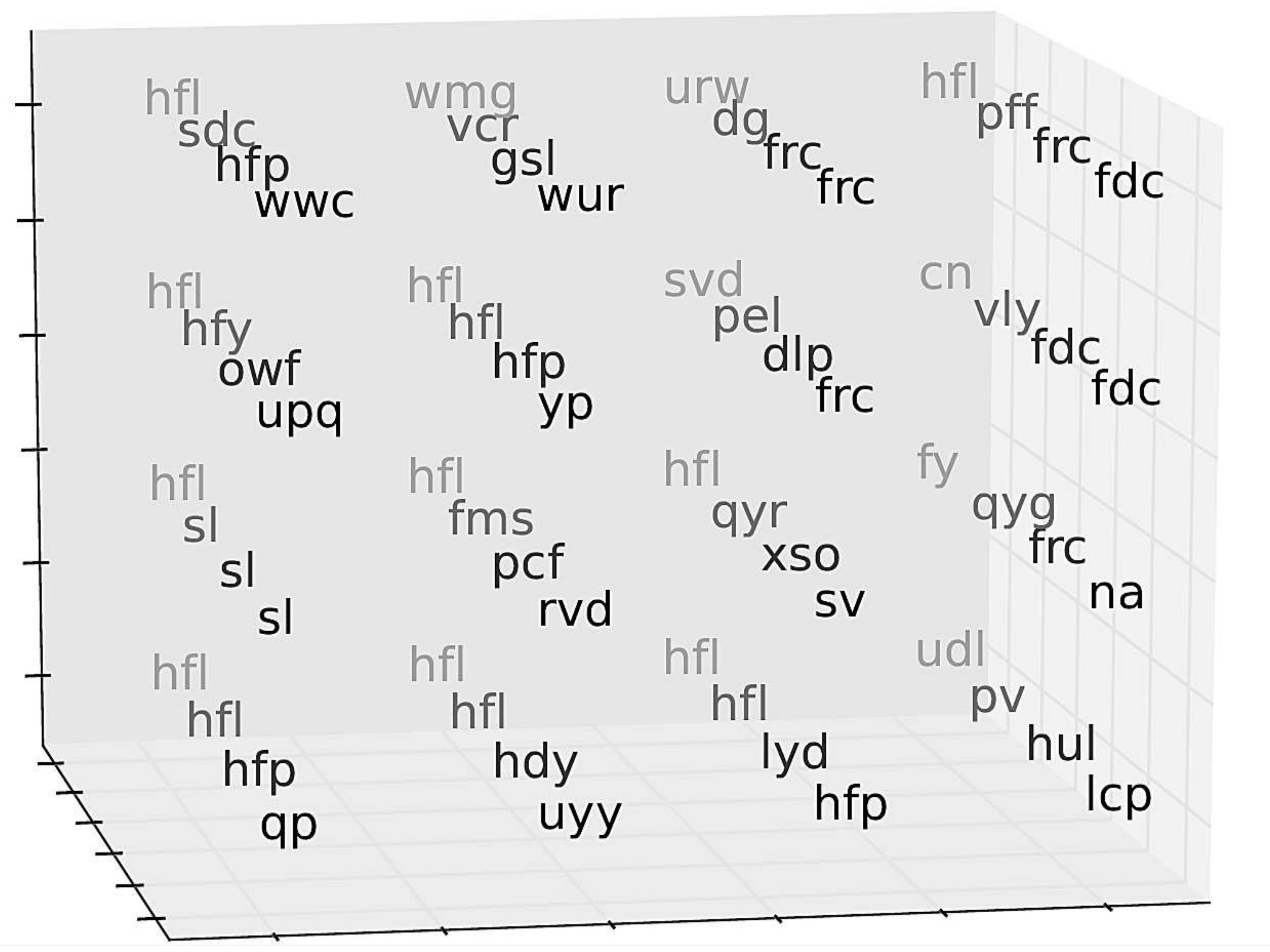}}
  \\
  \subfloat[3D t-SNE output]{\includegraphics[scale=0.22]{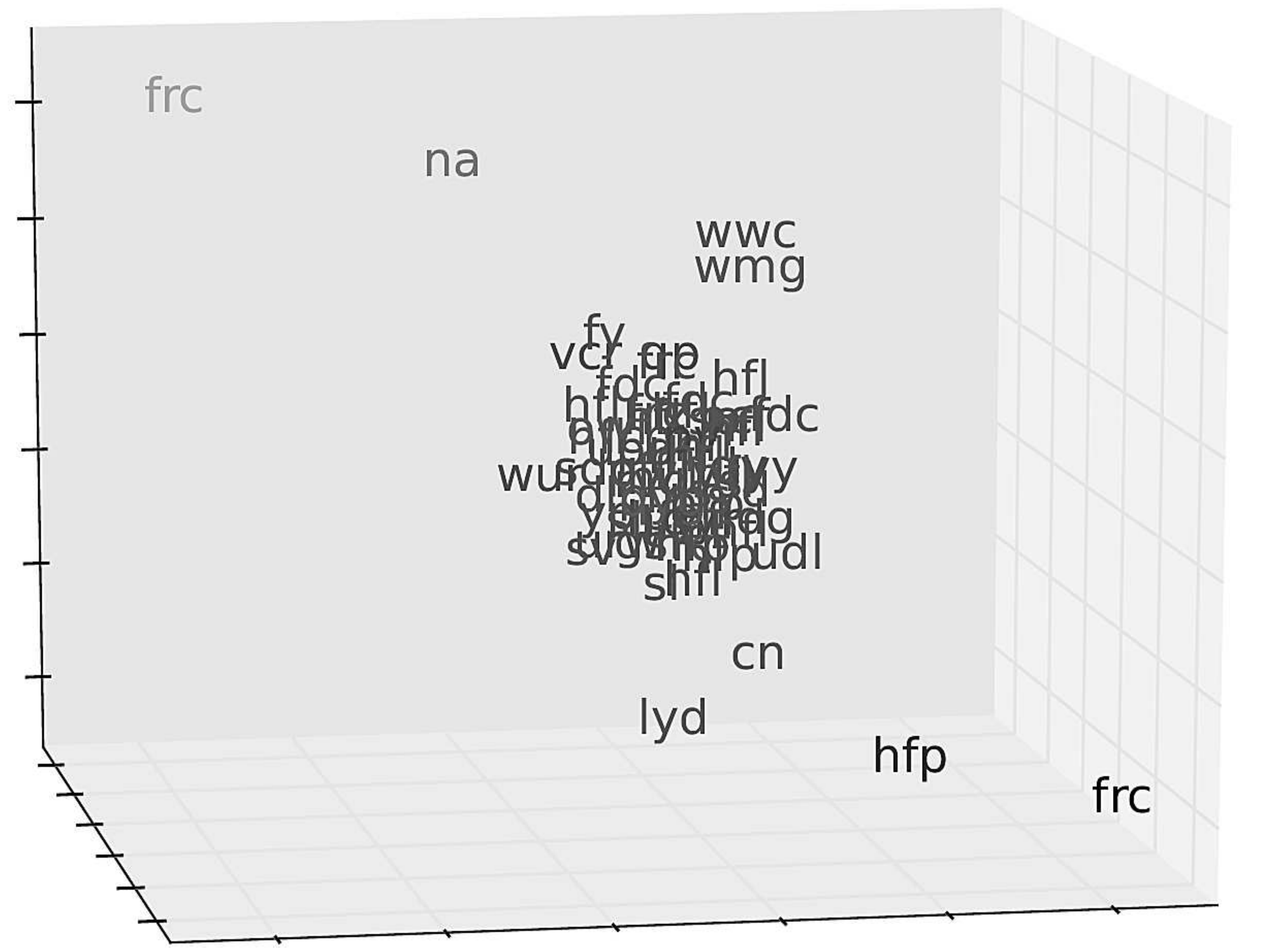}}
  \subfloat[SD output from (g)]{\includegraphics[scale=0.22]{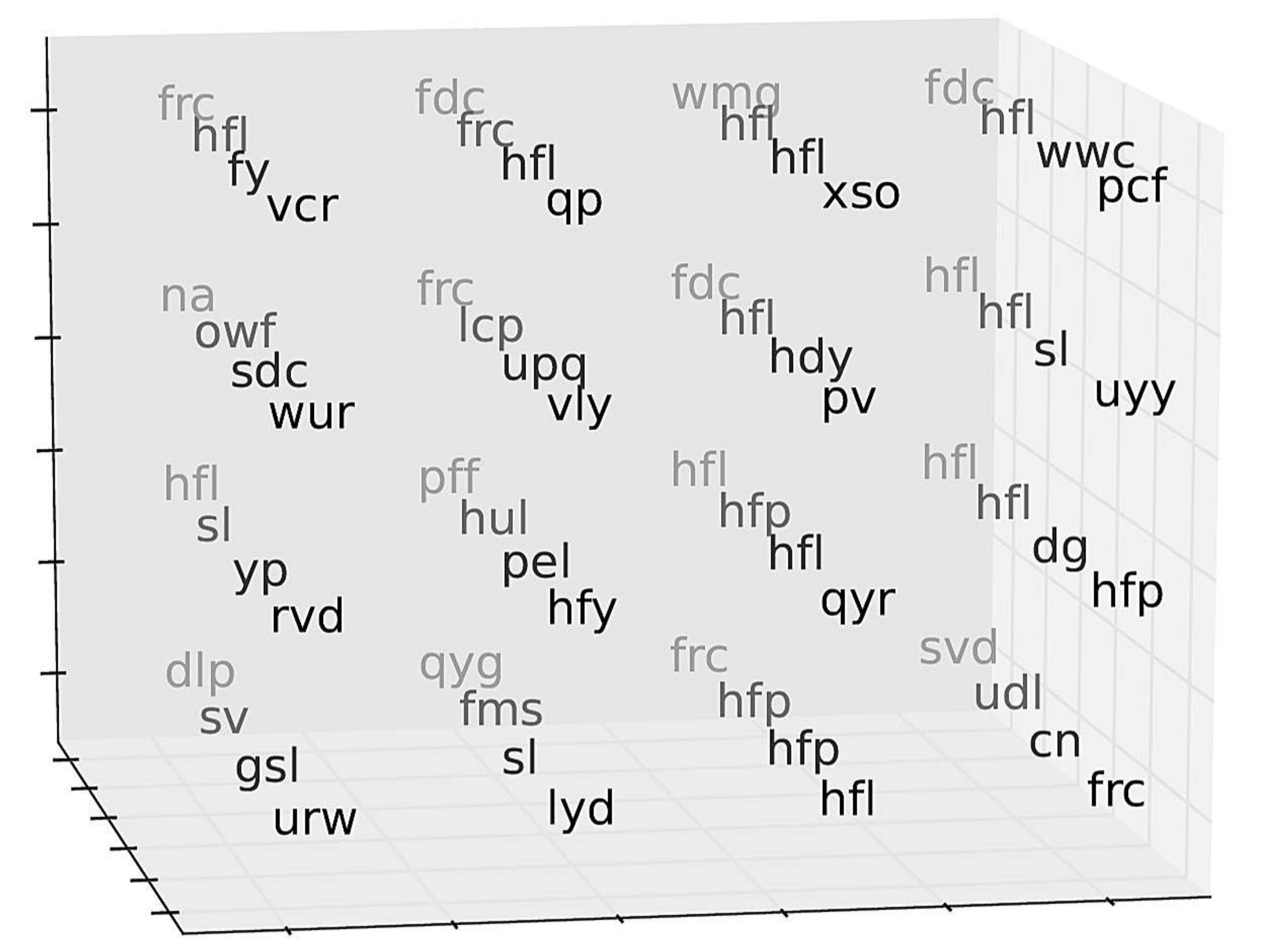}}
\caption{Examples of placing $64$ data points over different layouts with the SD algorithm.}
\label{fig-mdstsne}
\end{figure*}

\begin{figure*}[htp]
  \centering
  \subfloat[current activities of an entity]{\includegraphics[scale=0.16]{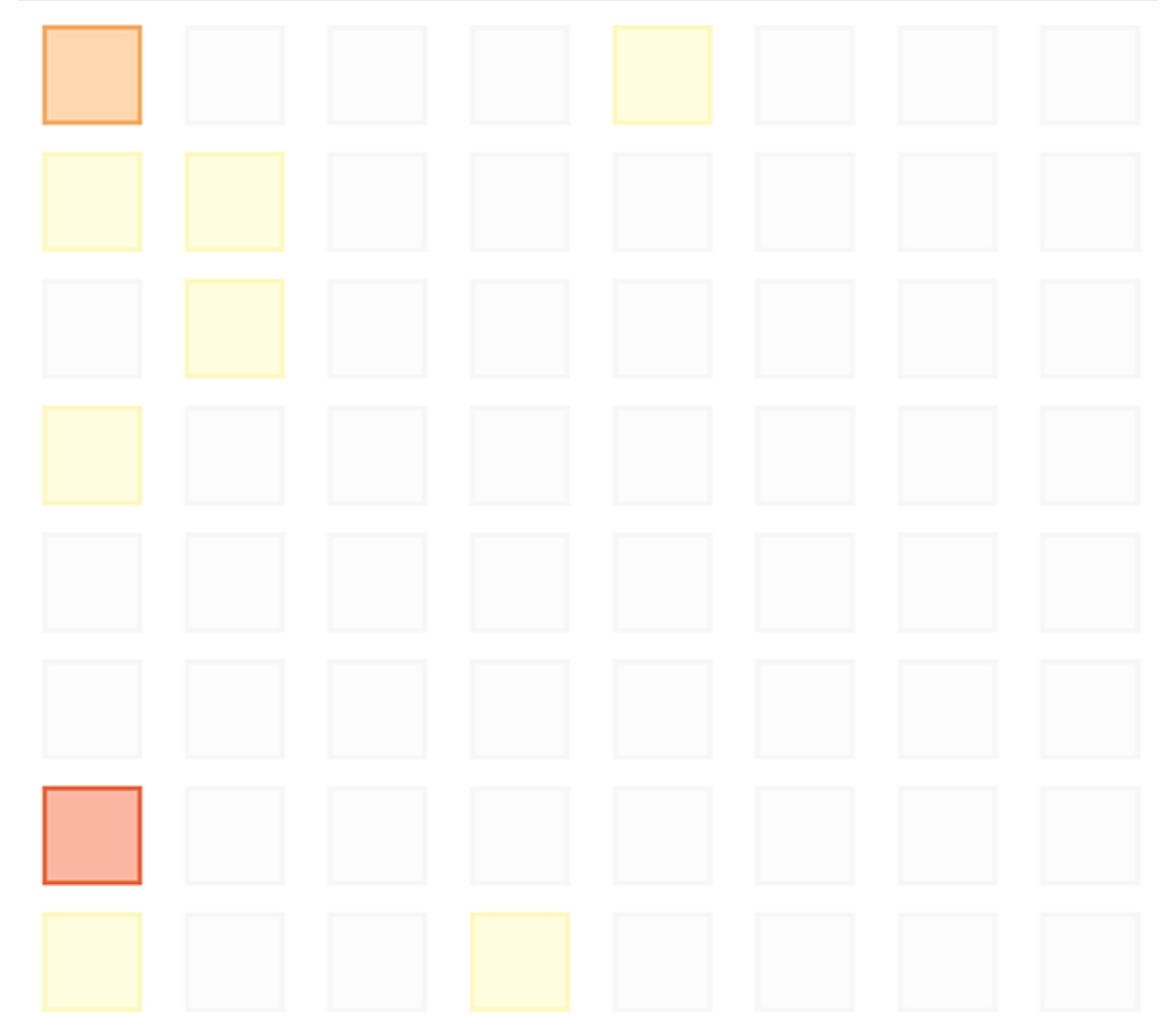}}\quad
  \subfloat[historical activities of this entity]{\includegraphics[scale=0.16]{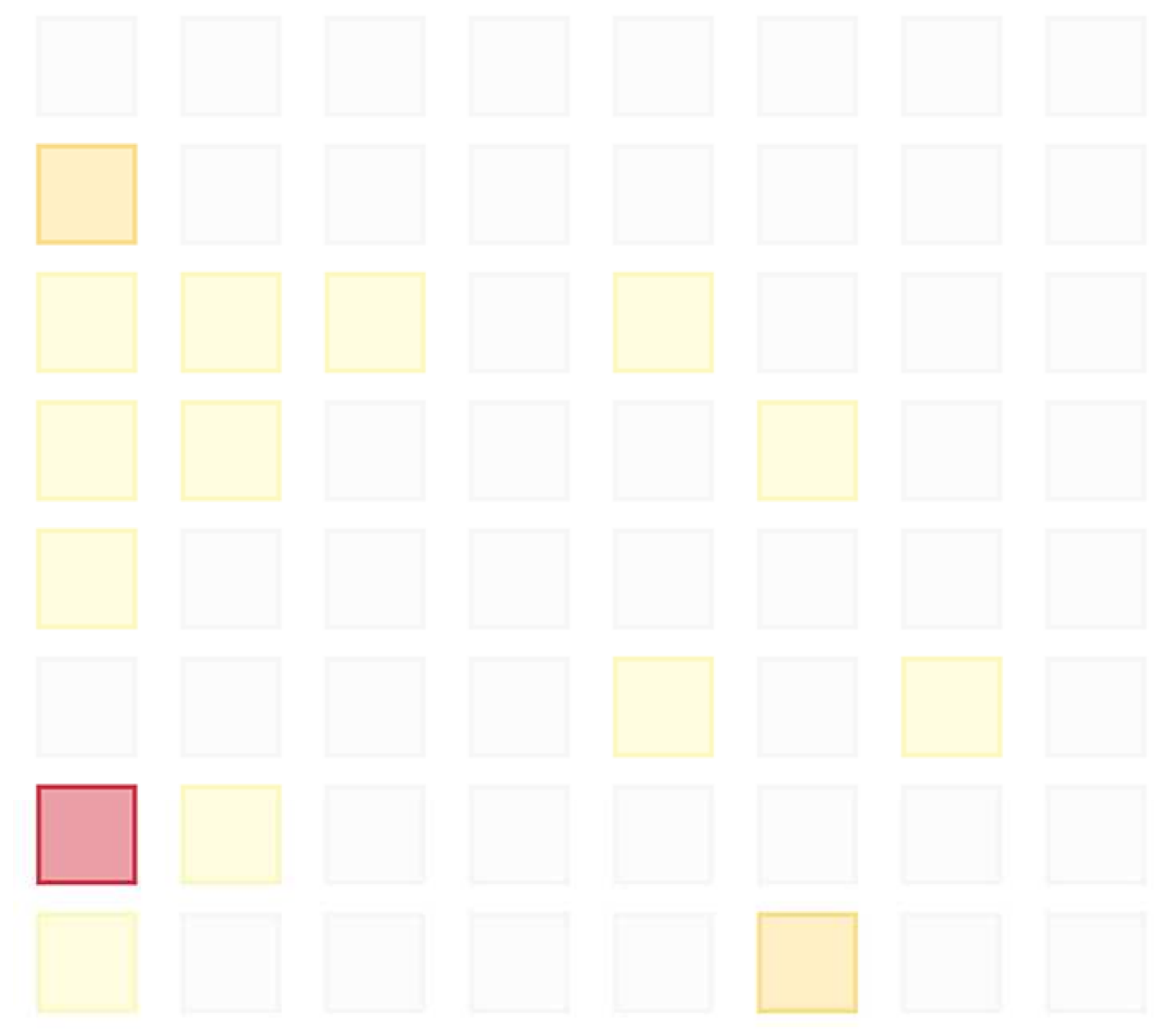}}\quad
  \subfloat[risk against the historical self]{\includegraphics[scale=0.16]{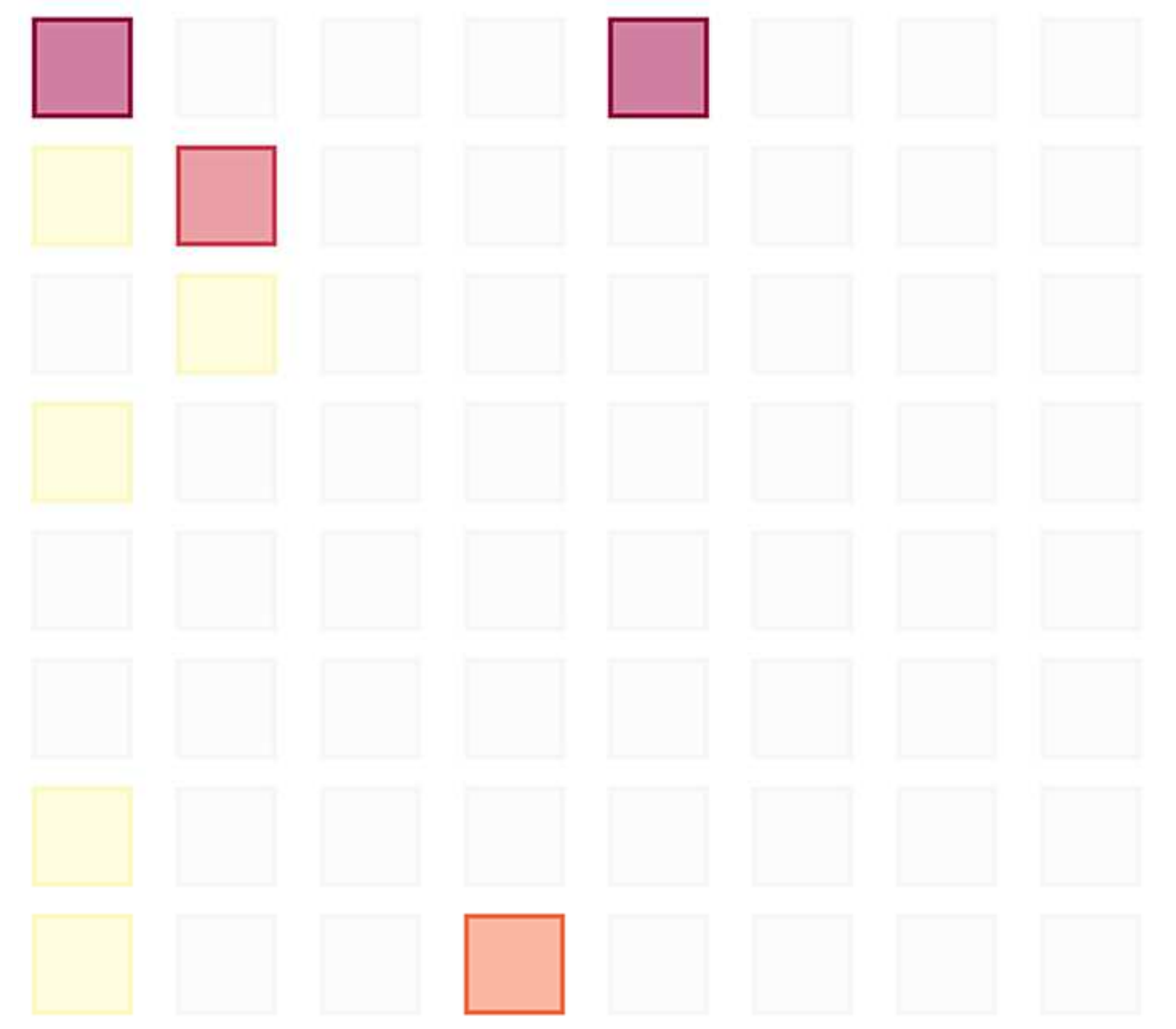}}\quad
  \subfloat[historical activities of the peers]{\includegraphics[scale=0.16]{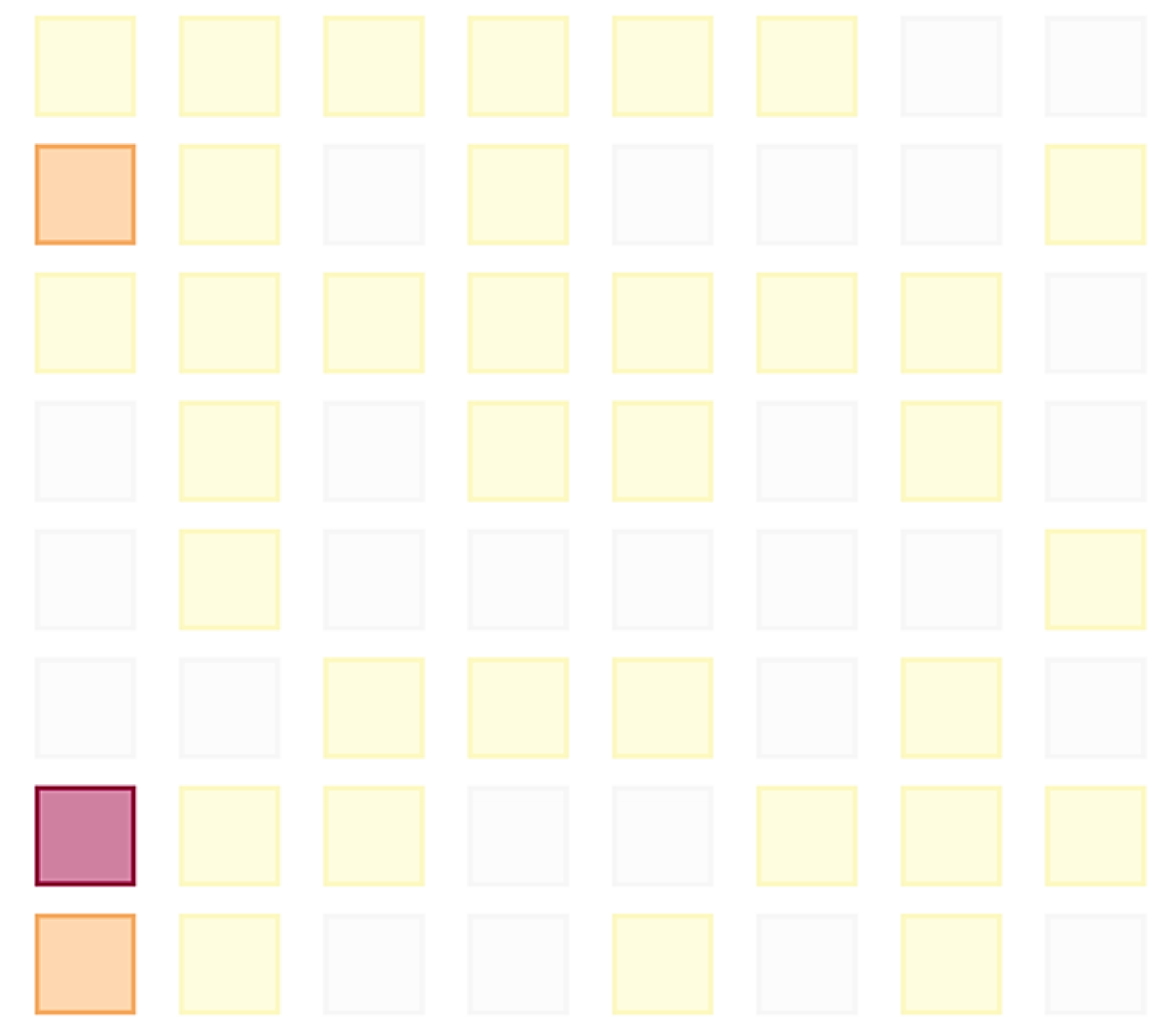}}\quad
  \subfloat[risk against the peers]{\includegraphics[scale=0.16]{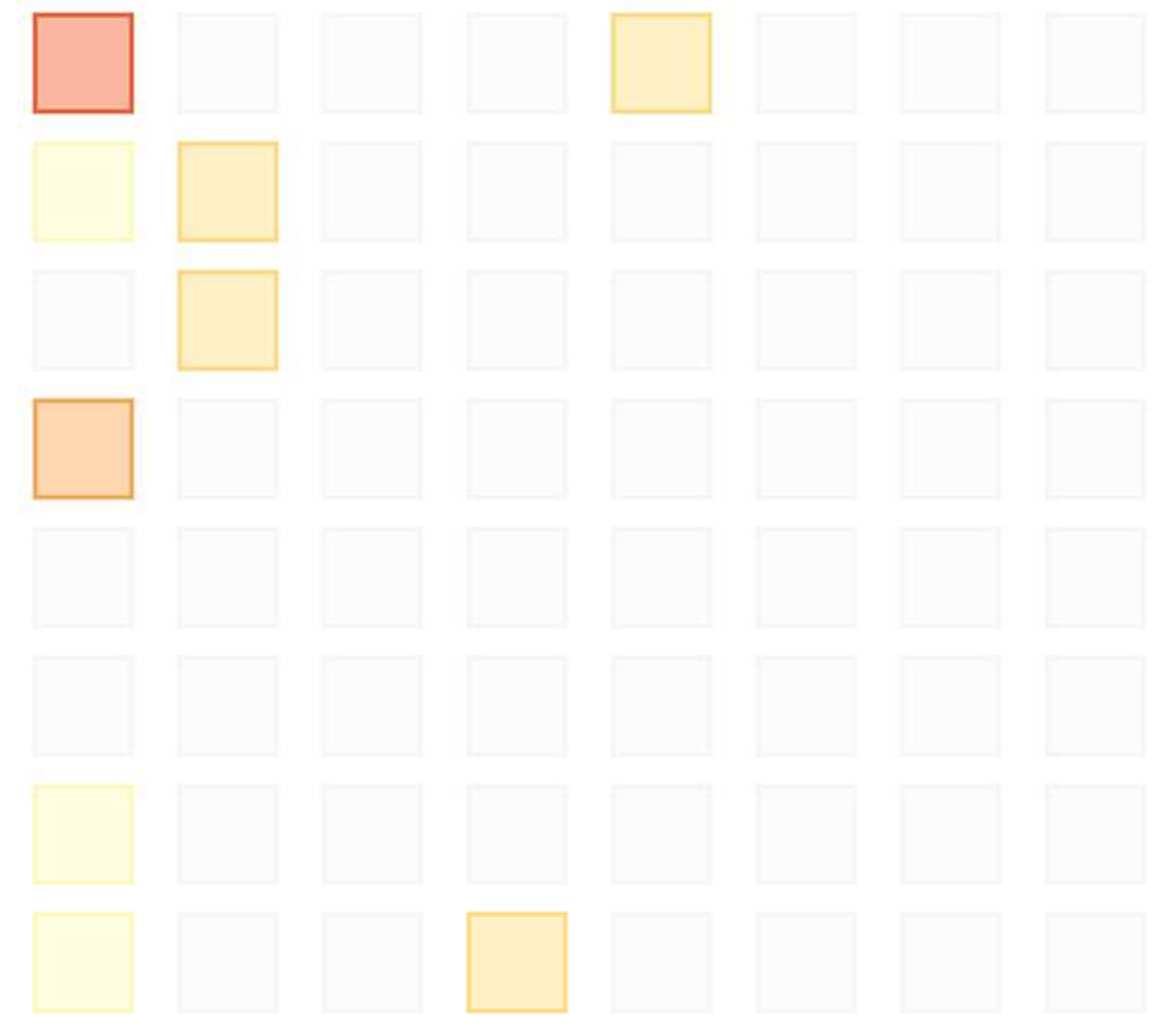}}
\caption{The topic grids. The self risk in (c) is derived from comparing the current activities (a) and the historical activities (b) of a specific entity. The peer risk in (e) is derived from comparing the current activities (a) and the peers' activities (d) of a specific entity.}
\label{fig-TG}
\end{figure*}

\begin{figure*}[ht]
\begin{center}
\centerline{\includegraphics[scale=1]{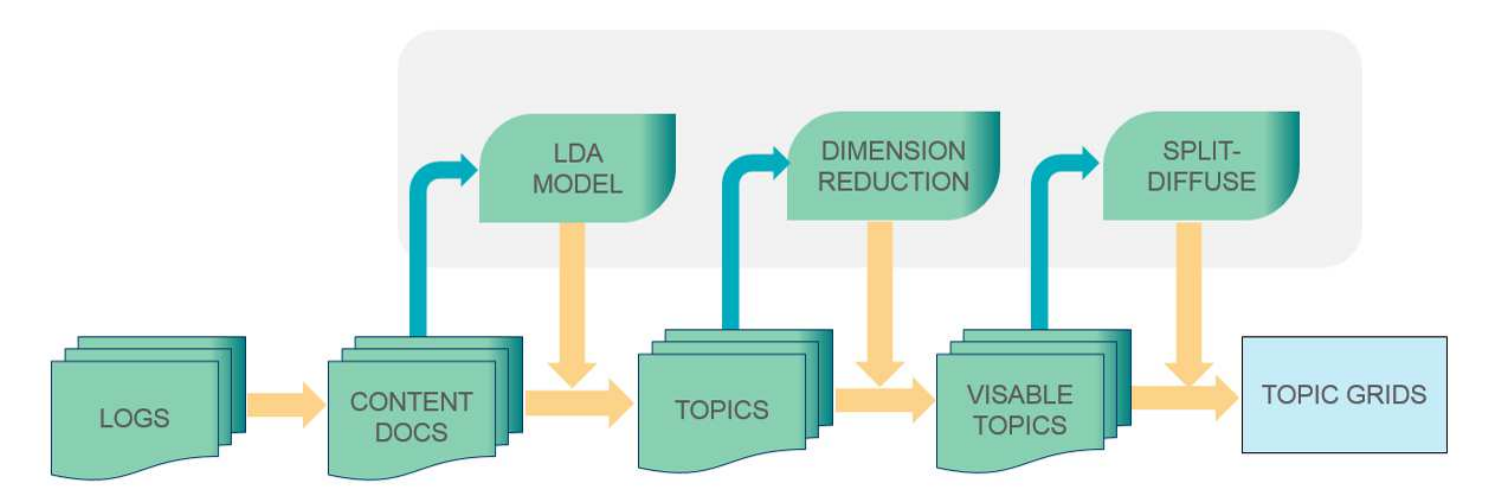}}
\caption{System blocks for generating the topic grids. The blue arrows are the training routes. There are three sub-systems to be trained by a benchmark data set: the topic model (e.g., LDA), the dimension reduction algorithm on the trained topics, and the SD algorithm on topics over the reduced dimension. Once these sub-systems are trained, the data manipulation in the shaded sub-systems is fixed and the blue routes are turned off until the next re-train. The online routes are in yellow.}
\label{fig-TGBlocks}
\end{center}
\end{figure*}

\section{The Split-Diffuse Algorithm}

\begin{algorithm}[tb]
   \caption{Split-diffuse algorithm ($k$-dimensional)}
   \label{alg-SD}
   {\bfseries Input:} data points $\{p\}$ of length $g_1\times g_2 \times \dots \times g_k$, allocation vector $\vec{\textbf{c}}=(0,\dots ,0) \in \left(\mathcal{Z}^+\right)^k$\\ 
   {\bfseries split-diffuse ($\{p\}$, $\vec{\textbf{g}}$, $\vec{\textbf{c}}$)}
\begin{algorithmic}
   \STATE $k \leftarrow $ length of $\{p\}$
   \IF{$k = 0$,} 
   \STATE return $null$
   \ELSIF{$k = 1$,} 
   \STATE resolve $\mathbb{S}(p)$ from $\vec{\textbf{c}}$, return $p$
   \ENDIF
   \STATE $a \leftarrow \argmax_a{(g_a)}$
   \STATE $w_l \leftarrow g_a$ integer divides $2$; 
   \quad $w_r \leftarrow g_a - w_l$
   \STATE $\vec{\textbf{g}_l} \leftarrow \vec{\textbf{g}}$ with $g_a$ being $w_l$;
   \quad $\vec{\textbf{g}_r} \leftarrow \vec{\textbf{g}}$ with $g_a$ being $w_r$
   \STATE $\vec{\textbf{c}_l} \leftarrow \vec{\textbf{c}}$;
   \quad $\vec{\textbf{c}_r} \leftarrow \vec{\textbf{c}}$ with $c_a$ being $c_a + w_r$
   \STATE $m \leftarrow w_l \prod_{i\neq a}g_i$
   \STATE $\{q\} \leftarrow \{p\}$ sorted in dimension $a$
   \STATE 
   \begin{tabular}{@{}ll@{}l} 
   return & ( & [{\bfseries split-diffuse} ($\{q_i:i\leq m\}$, $\vec{\textbf{g}_l}$, $\vec{\textbf{c}_l}$)], \\ 
   & & [{\bfseries split-diffuse} ($\{q_i:i> m\}$, $\vec{\textbf{g}_r}$, $\vec{\textbf{c}_r}$)])
   \end{tabular}
\end{algorithmic}
\end{algorithm}

We propose an algorithm called the split-diffuse (SD) algorithm to realize the strategy above. The idea of the SD algorithm is shown as a simple example in Figure~\ref{fig-SD}. There are $16$ points to be placed over a $4\times 4$ layout space $\mathbb{S}$. The SD algorithm first picks the $x$-dimension to split, and splits the data points into two groups, the ones smaller or equal to the median, and the ones not. Each group goes through this split step again over the $y$-dimension, as in Figure~\ref{fig-SD} (b). In this case, we recursively split the points in $x$- and $y$-dimension iteratively, until there is only one point in current recursion.

We keep track of the splitting path in the string $c$. At the end of the recursion, the placement of the single point $p$ in $\{p\}$ is resolved as $\mathbb{S}(p)$. The values of $\mathbb{S}(p)$ reflect the mapped position in the final placement and are integers in all dimension of $\mathbb{S}$. For example, the black dot $p'$ in Figure~\ref{fig-SD} (c) is placed at $\mathbb{S}(p')=(2,2)$, where the indexes begin with zero at the upper left corner. As a result, the mapped data points are equally spaced in the predefined layout of shape $g_1\times g_2 \times \dots \times g_k$. To achieve this uniformity in the space $\mathcal{L}$, the data points are essentially diffused from the denser area to the coarser area by the SD algorithm --- hence the name split-diffuse.

Some examples on the input and the output of the SD algorithm are shown in Figure~\ref{fig-mdstsne}. We generate $64$ topics regarding to the content of a repository access logs. The topics are mapped from $\mathcal{H}$ to $\mathcal{L}$ via MDS in Figure~\ref{fig-mdstsne} (a)(e) and t-SNE \cite{Van08} in Figure~\ref{fig-mdstsne} (c)(g). A topic is represented by the encrypted first three letters of the most descriptive word. Topics sharing the same representative word are the topics close to each other in $\mathcal{H}$.

In both of our MDS and t-SNE cases, the topics with the same representative word tend to form clusters. The corresponding output of the SD algorithm on MDS and t-SNE inputs are alsow shown in Figure~\ref{fig-mdstsne}. The clustering topology is maintained in a way when the uniformity of the point placement is enforced across all dimensions. The topical behavior can be further visualized on $\mathbb{S}$, as in Figure~\ref{fig-TG}. The behavioral metrics will be discussed later in Section~\ref{sec-ba}.

\begin{figure*}[htp]
  \centering
  \subfloat[mouse over a grid]{\includegraphics[scale=0.28]{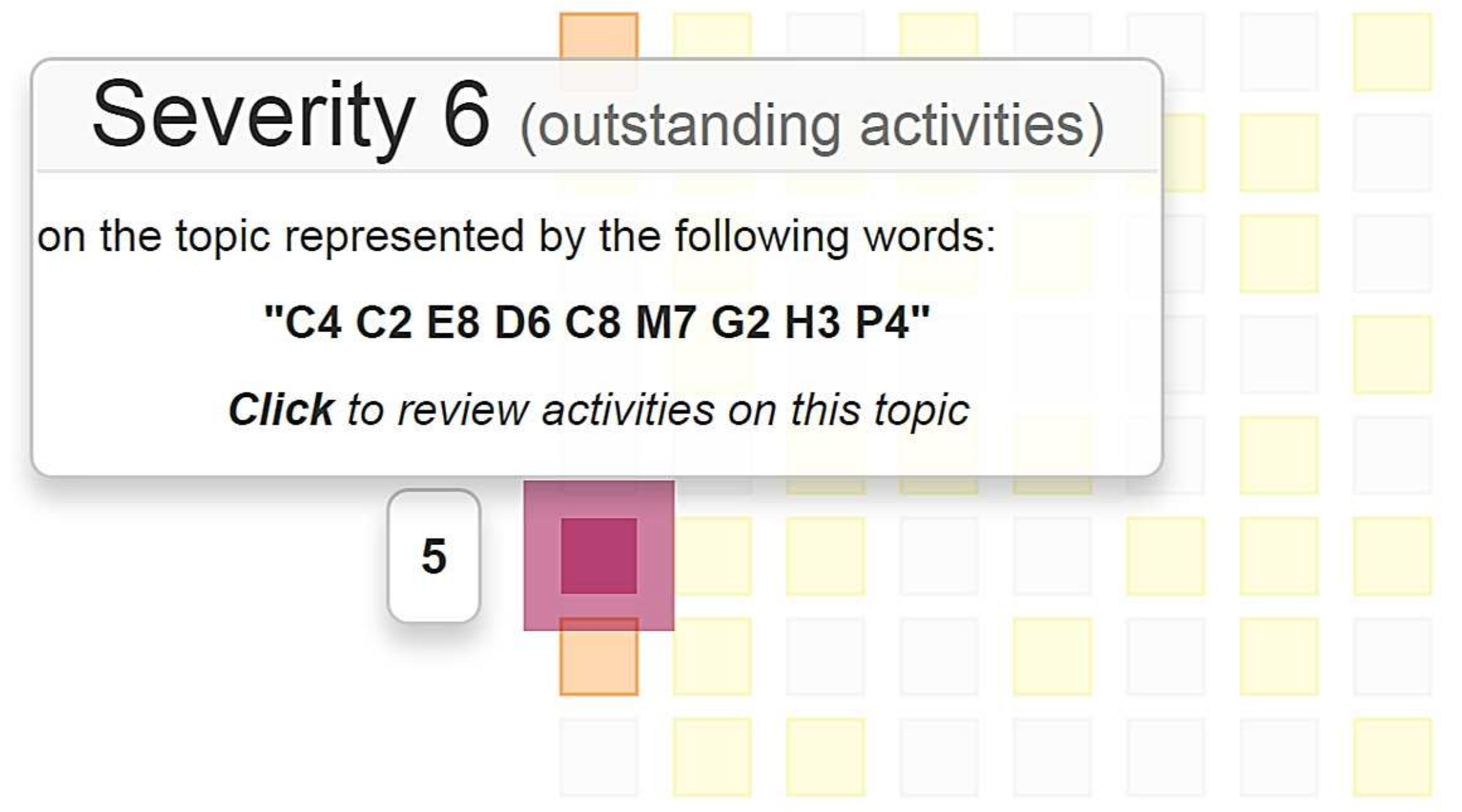}} \quad\quad\quad
  \subfloat[popup after clicking a grid]{\includegraphics[scale=0.40]{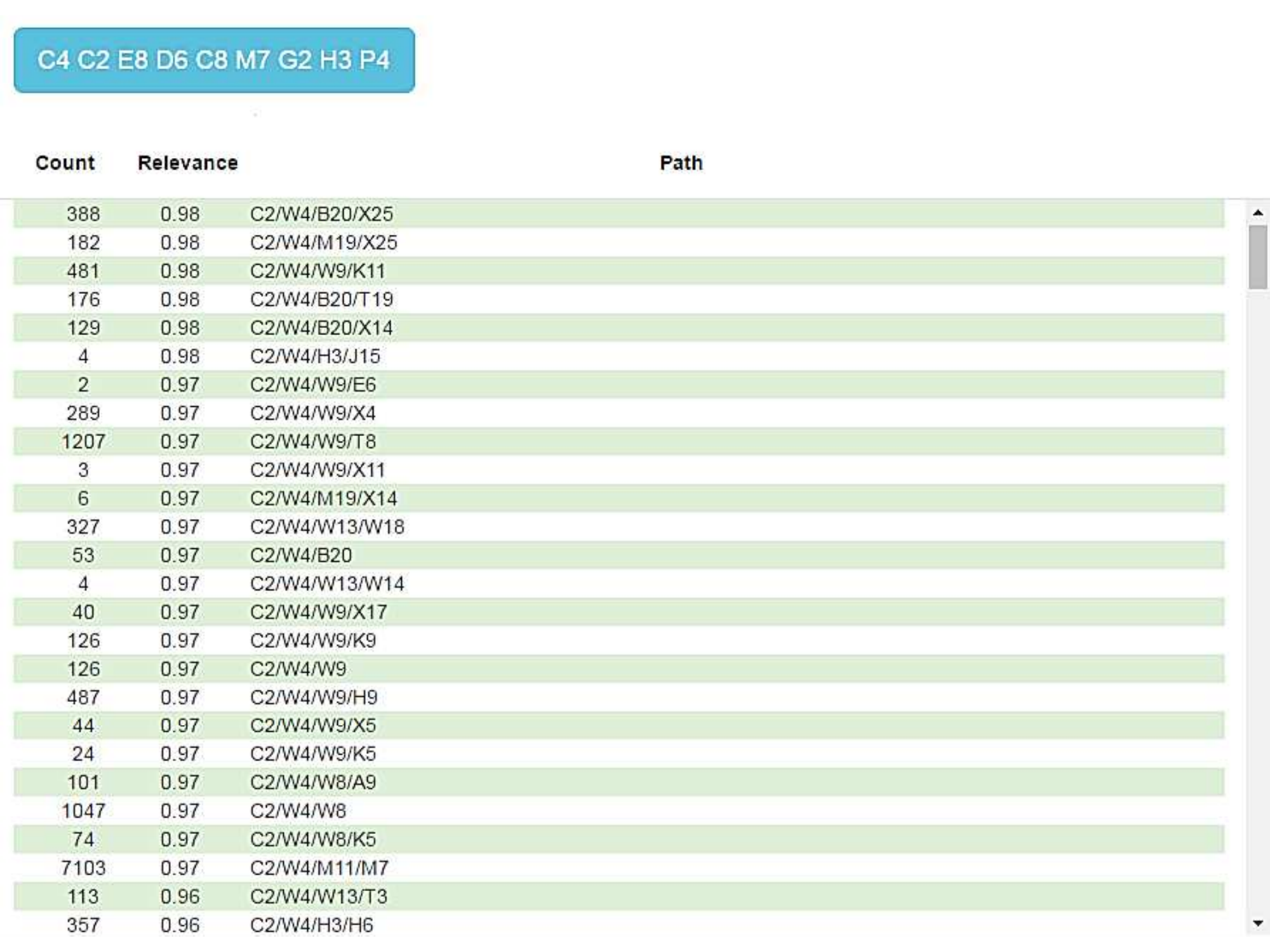}}
\caption{Interacting with the topic grids.}
\label{fig-Int}
\vskip -0.1in
\end{figure*}

\begin{figure*}[htp]
  \centering
  \subfloat[Topic curtain]{\includegraphics[scale=0.45]{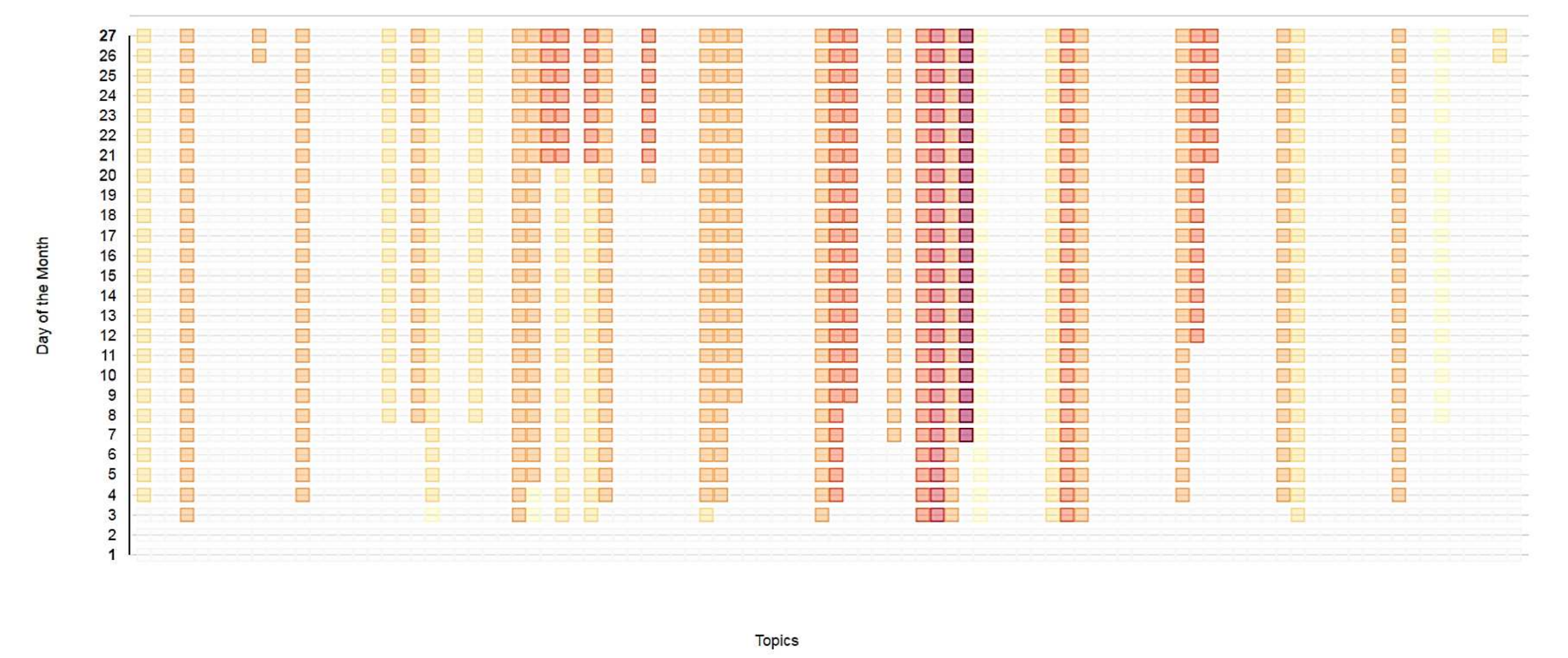}} 
  \subfloat[Topic shower]{\includegraphics[scale=0.21]{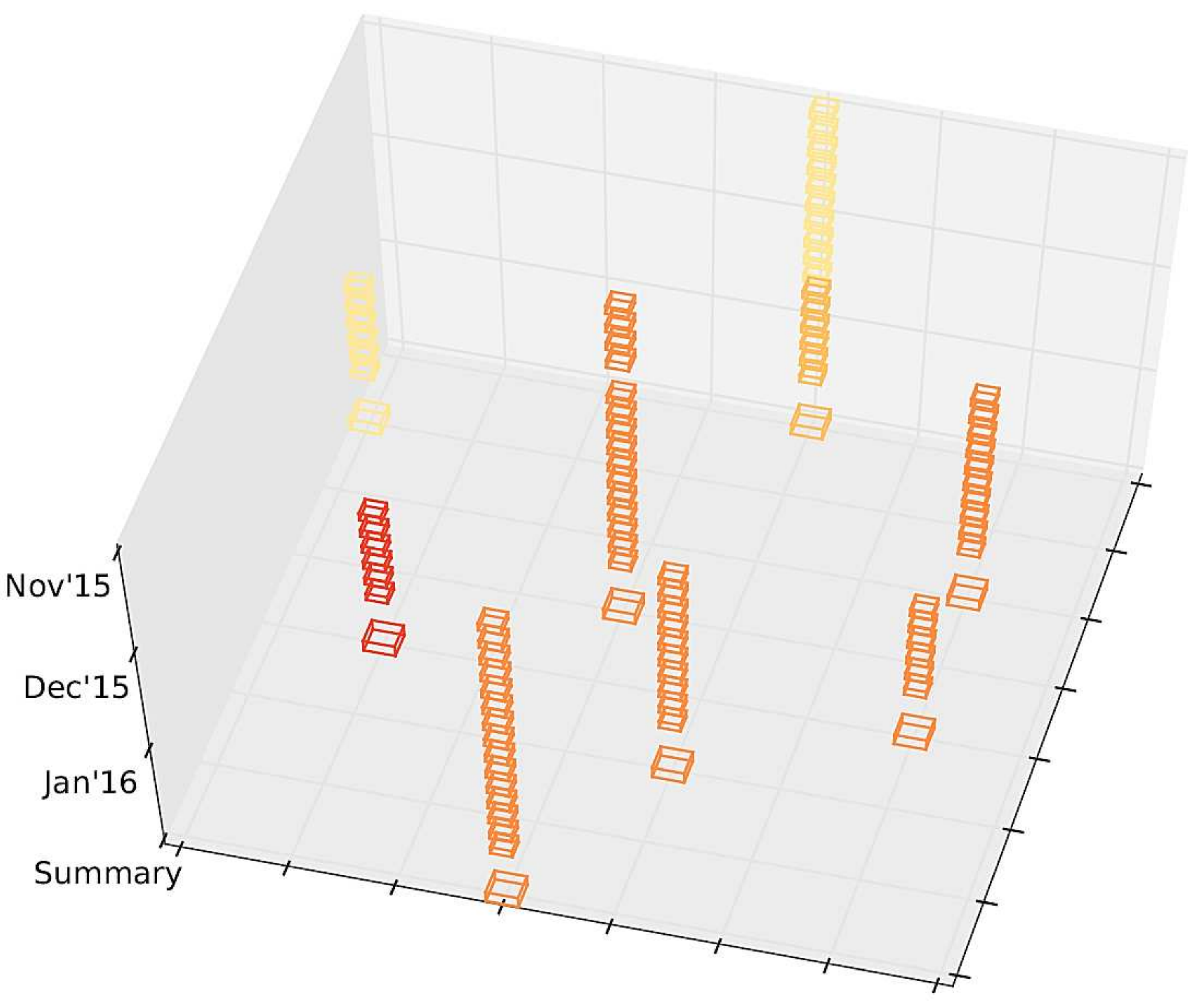}}
\caption{Other variants of the topic grids}
\label{fig-TCS}
\end{figure*}

\section{Behavior Anomaly and the Topic Grids}\label{sec-ba}

Consider the case that terabytes of network logs comes in everyday. Is there a way to quickly detect the anomaly entities and provide the visual breakdown about their behavioral change? In the cyber security domain, the SD algorithm is applied to analyze behavioral content via the proposed system in Figure~\ref{fig-TGBlocks}. It starts with the log files that record the user behavior of a repository. This data goes through and trains three sub-systems: the topic model, the dimension reduction, and the SD map. We use six month worth of data to ensure good content coverage fed into the topic model. Once trained, the topics, their coordinates in $\mathcal{L}$, and the SD map are then fixed for the incoming data. The system analyzes the risk over all topics for each user. Behavior anomaly is ranked and rendered to the human experts.

The goal of the system is to detect behavioral anomaly based on the access logs. After proper punctuation, the accessed path (or URL), the content, and/or any meta data regarding to an access can be viewed as a document, called the content document. We use the LDA model to decide the topics among all these content documents over the benchmark period of time. In one of our implementations, $64$ topics are generated at a word vector space of $19K+$ dimensions. The relevance between a content document and each individual topic is measured. The anomaly, or risk, of an access is quantified by the difference of topical relevance between this access and the benchmark profile of the entity. The topics are then projected to the 2D visualization space via any existing dimension reduction algorithm. We use MDS to maintain the global similarity. But it could be other algorithms emphasizing other point-wise relationships --- as long as the semantically similar topics are mapped close to each other.

The SD algorithm then places these $64$ topics over an $8\times8$ layout, forming a set of topic grids. Topical activities of an entity (user, group, node, etc), of the peers of the entity, of the history of the entity can all be compared on the same set of grids. Other topical metrics, such as risk or trend, can also be visualized over the same set of grids. One use case is presented in Figure~\ref{fig-TG}.

The activities of a specific user over a specific period of time is shown in Figure~\ref{fig-TG} (a), and is compared with the same metric of the same user over a historical benchmark period in Figure~\ref{fig-TG} (b). The color in Figure~\ref{fig-TG} (a) and (b) reflects the volume of the activities over the topic. Red means higher volume of topical activities. The difference between Figure~\ref{fig-TG} (a) and (b) is the anomaly against the historical behavior of the user, and is quantified in Figure~\ref{fig-TG} (c), where the red color indicates higher topical risk. One candidate topical risk metric between time periods $T_1$ and $T_2$ over topic $t$ for the same entity $e$ is
\begin{equation} \label{eq-risk}
R^{(B_{e,T_1},B_{e,T_2})}_t = \log(\sum_{a\in B_{e,T_2}}r_a+1) - \log(\sum_{a\in B_{e,T_1}}r_a+1),
\end{equation}
where $r_a$ is the relevancy for activity $a$ to topic $t$, and $B$ is the set of activities defined by the unique content documents of all activities logged within the corresponding time period. In our example, the red block at the upper left corner of Figure~\ref{fig-TG} (c) comes from the recent activities of the same topic (location) in Figure~\ref{fig-TG} (a), while having no historical activities on the same topic in Figure~\ref{fig-TG} (b). Similar comparison is made against the peers of the user as in Figure~\ref{fig-TG} (d) and (e). Changing the definition on $B$ or adding other quantifiers in Equation~\ref{eq-risk} leads to different activity and risk metrics, which can be visualized similarly as in Figure~\ref{fig-TG}.

When not directly displaying the detail keywords about a topic, the topic grids requires less space. At the same time, the human expert still can easily keep track of the topics based on their indexes over all dimensions and compare the difference between different sets of topic grids. Human interaction, which is the ultimate goal of the uniform placement of the data points, can be done more easily on the topic grids than on the raw dimension reduction output as in Figure~\ref{fig-mdstsne} (a) and (c). Example interactions shown in Figure~\ref{fig-Int} are the mouse over event to popup the topical summary, and the click event to overlay the detailed topical activities.

It is also possible to reserve one dimension of the visualization space as the time axis, to better analyze the behavior change over time. In this case, an one-dimensional SD algorithm can be applied to place the topics in the 1D space. The result is as in Figure~\ref{fig-TCS}(a), where x-axis represents the indexes of the topics and the y-axis is the time. In this implementation, we accumulate the behavior over the month. Therefore, the risk remain high for the rest of the month when a user touches a risky topic. This memory effect creates pattern like a curtain, thus the name topic curtain.  Meanwhile, we can pile up the 2D topic grids on the time axis over the 3D $\mathcal{L}$, as the topic shower shown in Figure~\ref{fig-TCS}(b). With normal or usual behavior, it is expected to see the consistent hot grids at the same locations over time.

\section{Performance of the Split-Diffuse Algorithm} 
 
In order to measure the performance of the split-diffuse algorithm, we randomly generate the topics in the 2D space and let the algorithm evenly space these topics over a predefined layout in the space of same dimensionality. The random topics are generated over a 2D space via two types of sampling approaches, namely the uniform approach,
\begin{equation} \label{eq-uni}
\begin{array}{ll}
\mathbb{U}(\rho) : \left[\begin{array}{c}X\\Y\end{array}\right]^T 
\left[ \begin{array}{cc}\rho&0\\0&1\end{array} \right],
&X,Y \sim \mathcal{U}(-0.5,0.5)
\end{array}
\end{equation}

and the Gaussian approach.
\begin{equation} \label{eq-gau}
\begin{array}{l}
\mathbb{G}(\theta,\phi) : \left[\begin{array}{c}X\\Y\end{array}\right]^T
\left[ \begin{array}{cc}\phi&0\\0&1\end{array} \right]
\left[ \begin{array}{cc}\cos\theta&\sin\theta\\-\sin\theta&\cos\theta\end{array} \right],
\\
\\
\left[ \begin{array}{c}X\\Y\end{array} \right] \sim \mathcal{N}\left(\left[ \begin{array}{c}0\\0\end{array} \right],I_{2}\right)
\end{array}
\end{equation}

As mentioned in Section~\ref{sec-background}, the best attempts will be made to preserve the the topology of these points in $\mathcal{L}$ before the $\mathbb{S}$ mapping. That is, when $p_i$ is to the left of $p_j$, $\mathbb{S}(p_i)$ should be to the left of $\mathbb{S}(p_j)$, for any dimension $l$ of $\mathcal{L}$. For the case of placing $n\triangleq\|\{p\}\|$ data points over the space $\mathcal{L}$ of $k$ dimensions, there are totally $k{n \choose 2}$ constraints to be met. The error on such topology-maintaining attempt is then measured as
\begin{equation} \label{eq-I}
Err_{I} \triangleq \frac{e_{I,1}+e_{I,2}}{ {n \choose 2} \cdot k}
\end{equation}
where
\begin{equation*}
\begin{array}{l}
e_{I,1} \triangleq \|\{p_i<p_j, \mathbb{S}(p_i)\geq \mathbb{S}(p_j)\}\|_l \\
e_{I,2} \triangleq \|\{p_i\geq p_j, \mathbb{S}(p_i)<\mathbb{S}(p_j)\}\|_l
\end{array}
\end{equation*}
for all $i,j$ and for all $l^{th}$-dimension of $\mathcal{L}$.

One can generally regard the mapped value $\mathbb{S}(p)$ to be integer. There will be some amount of points sharing the same $\mathbb{S}(p)$ value in each of the dimension. Therefore, a loosen version of the error metric is: when $p_i$ is to the left of $p_j$, $\mathbb{S}(p_i)$ should not be to the right of $\mathbb{S}(p_j)$, for any dimension $l$ of $\mathcal{L}$. This loosen metric can be defined as

\begin{equation} 
Err_{II} \equiv \frac{e_{II,1}+e_{II,2}}{ {n \choose 2} \cdot k}
\end{equation}
where
\begin{equation*} 
\begin{array}{l}
e_{II,1} \equiv \|\{p_i<p_j, \mathbb{S}(p_i)>\mathbb{S}(p_j)\}\|_l \\
e_{II,2} \equiv \|\{p_i>p_j, \mathbb{S}(p_i)<\mathbb{S}(p_j)\}\|_l \\
\end{array}
\end{equation*}
for all $i,j$ and for all $l^{th}$-dimension of $\mathcal{L}$.

\subsection{Errors on Grid Layouts}

The error ratios on split-diffusing $\mathbb{U}(1)$ and $\mathbb{G}(\frac{4}{\pi},2)$ over various grid layouts are shown in Table~\ref{table-mer}. $Err_{I}$ decreases as the grid set size increases on both $\mathbb{U}(1)$ and $\mathbb{G}(\frac{4}{\pi},2)$ sampling schemes. However, $Err_{II}$ only decreases as the grid set size increases on the $\mathbb{U}(1)$ sampling. $Err_{II}$ increases as the grid set size increases on the $\mathbb{G}(\frac{4}{\pi},2)$ sampling.

\begin{table*}[t]
\caption{Mean error ratio over various random sampling schemes and various layouts in the 2D space. Both $Err_{I}$ and $Err_{II}$ are averaged over 1000 sets of samples from the indicated sampling schemes.}
\label{table-mer}
\begin{center}
\begin{small}
\begin{sc}
\begin{tabular}{ccccc}
\hline
Layout & Sampling & Constraints & $Err_{I}$ & $Err_{II}$ \\
\hline
$4\times4$ & $\mathbb{U}(1)$   & 240   & 0.2042 & 0.0292 \\
$8\times8$ & $\mathbb{U}(1)$   & 4,032  & 0.1347 & 0.0270 \\
$16\times16$ & $\mathbb{U}(1)$ & 65,280 & 0.0776 & 0.0192 \\
$32\times32$ & $\mathbb{U}(1)$ & 1,047,552 & 0.0426 & 0.0124 \\
$64\times64$ & $\mathbb{U}(1)$ & 16,773,120 & 0.0228 & 0.0074 \\
\hline
$4\times4$ & $\mathbb{G}(\frac{4}{\pi},2)$   & 240   & 0.2368 & 0.0618 \\
$8\times8$ & $\mathbb{G}(\frac{4}{\pi},2)$   & 4,032  & 0.1845 & 0.0769 \\
$16\times16$ & $\mathbb{G}(\frac{4}{\pi},2)$ & 65,280 & 0.1459 & 0.0875 \\
$32\times32$ & $\mathbb{G}(\frac{4}{\pi},2)$ & 1,047,552 & 0.1242 & 0.0940 \\
$64\times64$ & $\mathbb{G}(\frac{4}{\pi},2)$ & 16,773,120 & 0.1131 & 0.0977 \\
\hline
\end{tabular}
\end{sc}
\end{small}
\end{center}
\end{table*}

Another observation is that although $Err_{I}$ decreases as the grid set size increases on both $\mathbb{U}(1)$ and $\mathbb{G}(\frac{4}{\pi},2)$ samplings, the decrease rate varies. On the $\mathbb{U}(1)$ sampling, $Err_{I}$ decreases almost $90\%$ when the layout grows from $4\times4$ to $64\times64$. However, $Err_{I}$ only decreases around half of its value during the same layout growth for the $\mathbb{G}(\frac{4}{\pi},2)$ sampling. By definition, $Err_{I} \geq Err_{II}$ on any given layout-sampling combination. Therefore, over the $\mathbb{G}(\frac{4}{\pi},2)$ sampling on Table~\ref{table-mer}, the decreasing of $Err_{I}$ and the increasing of $Err_{II}$ will not incur a cross-over as the layout becomes larger.

\subsection{Errors on Distribution of Input Data Points}

The error ratios on split-diffusing various sampling schemes over the same $8\times8$ grid layout are shown in Figure~\ref{fig-err2}. The error ratio reflects the portion of the constraints in the specific dimension that are not met. We also ask the SD algorithm to start the splitting in the $y$-dimension.

\begin{figure*}[htp]
  \centering
  \subfloat[$\mathbb{U}(\rho)$ samples]{\includegraphics[scale=0.15]{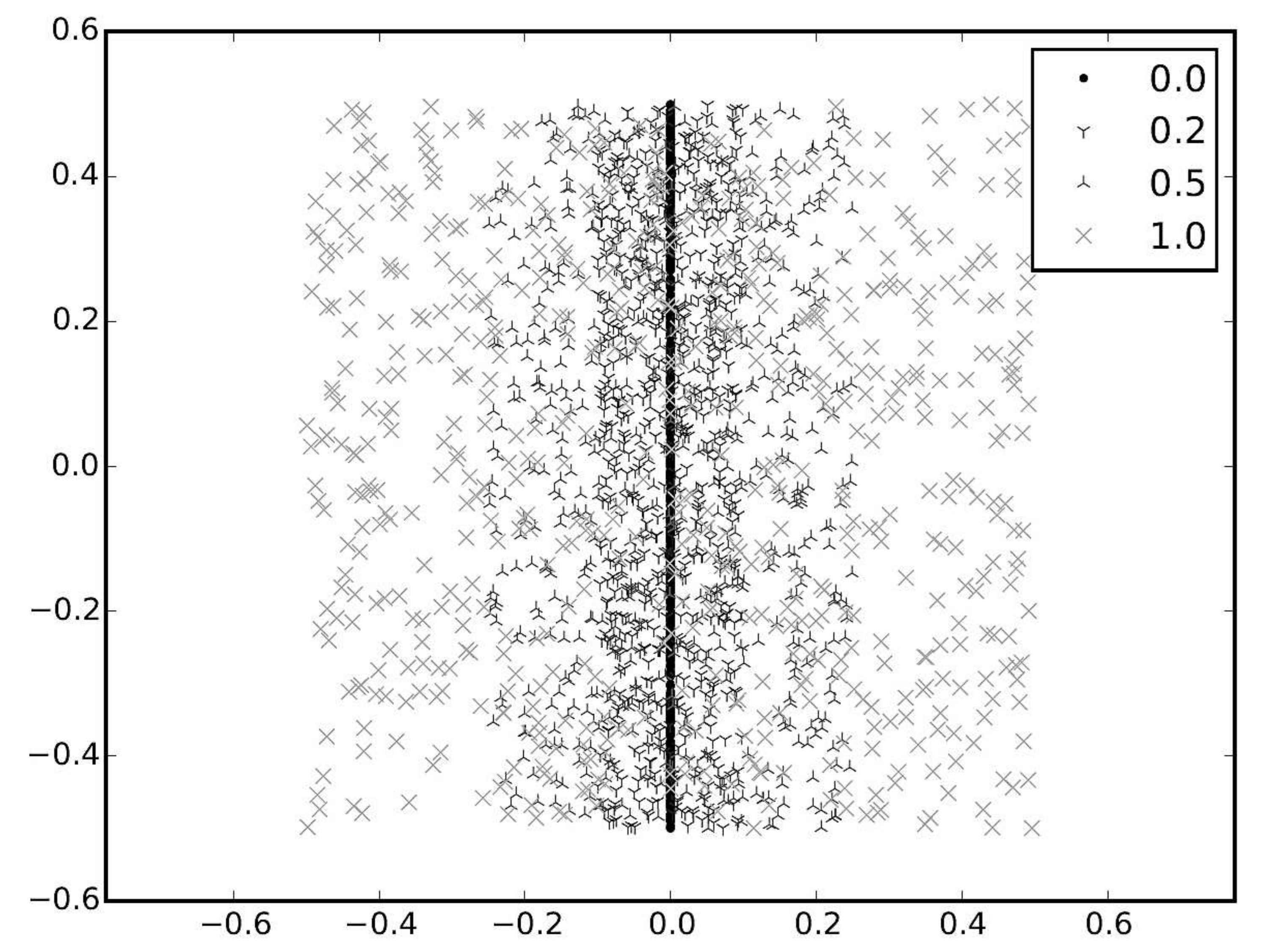}}
  \subfloat[$\mathbb{G}(\theta,2)$ samples]{\includegraphics[scale=0.15]{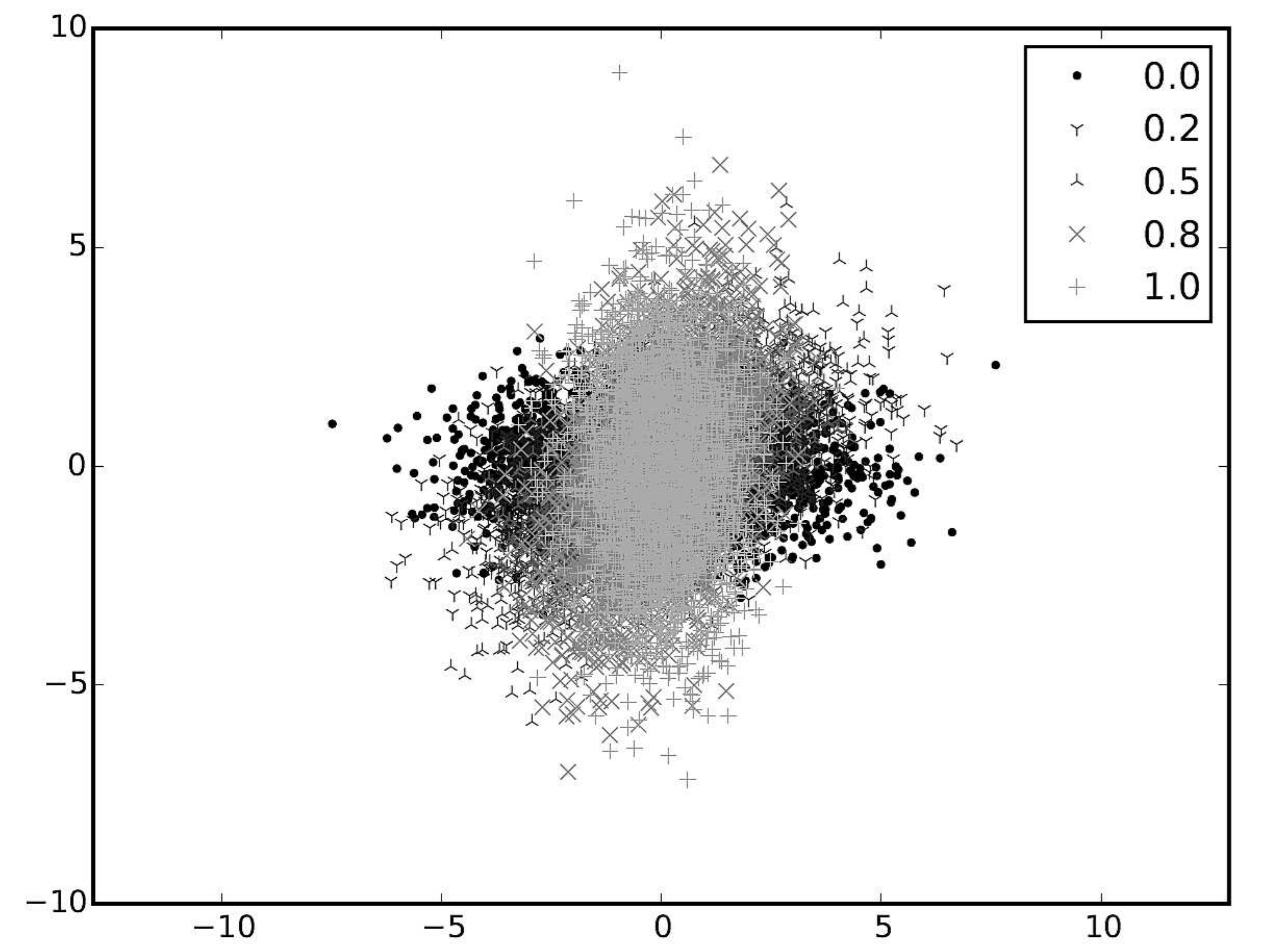}}
  \subfloat[$\mathbb{G}(\theta,5)$ samples]{\includegraphics[scale=0.15]{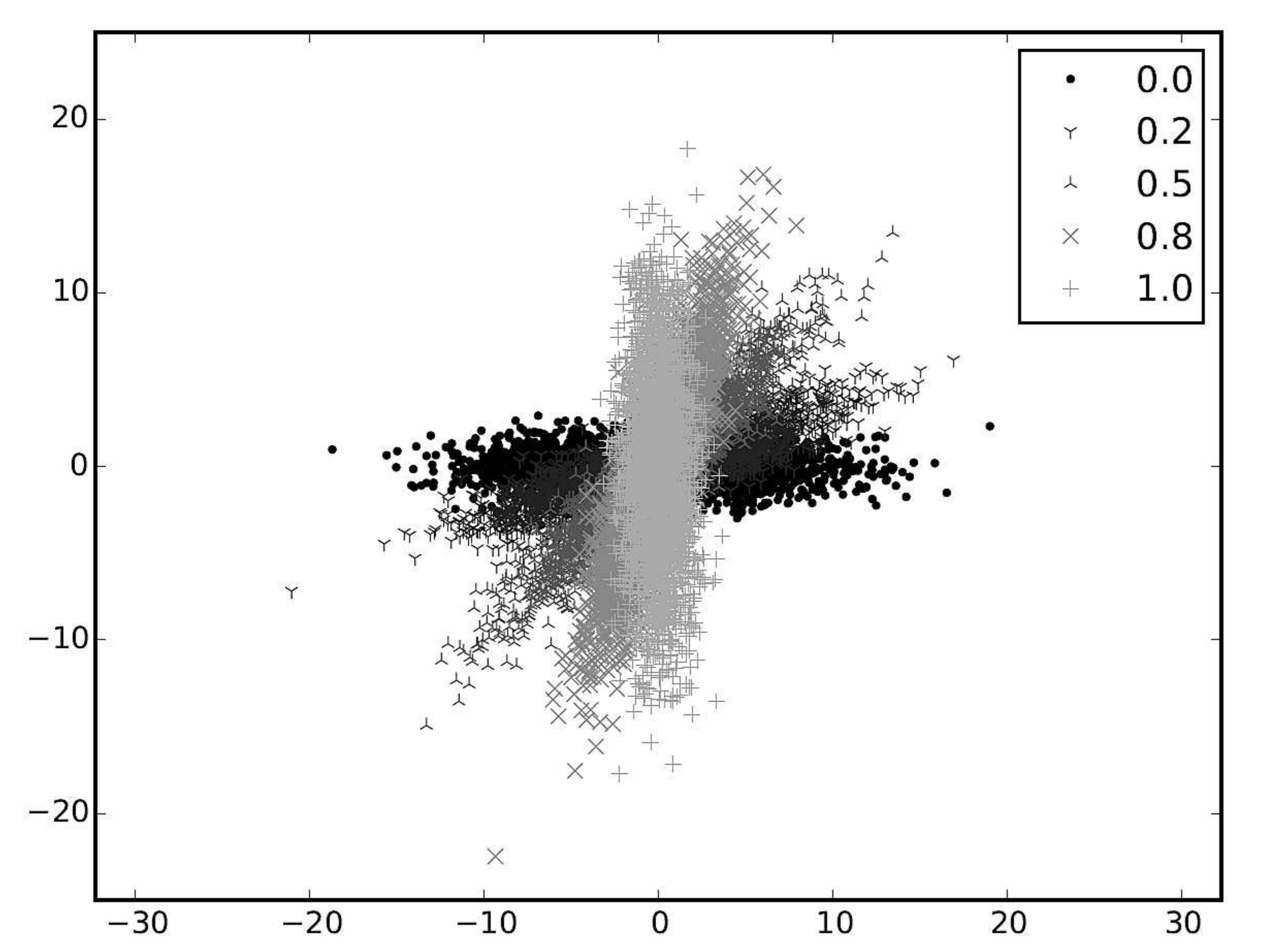}}
  \subfloat[$\mathbb{G}(\frac{\pi}{2},\phi)$ samples]{\includegraphics[scale=0.15]{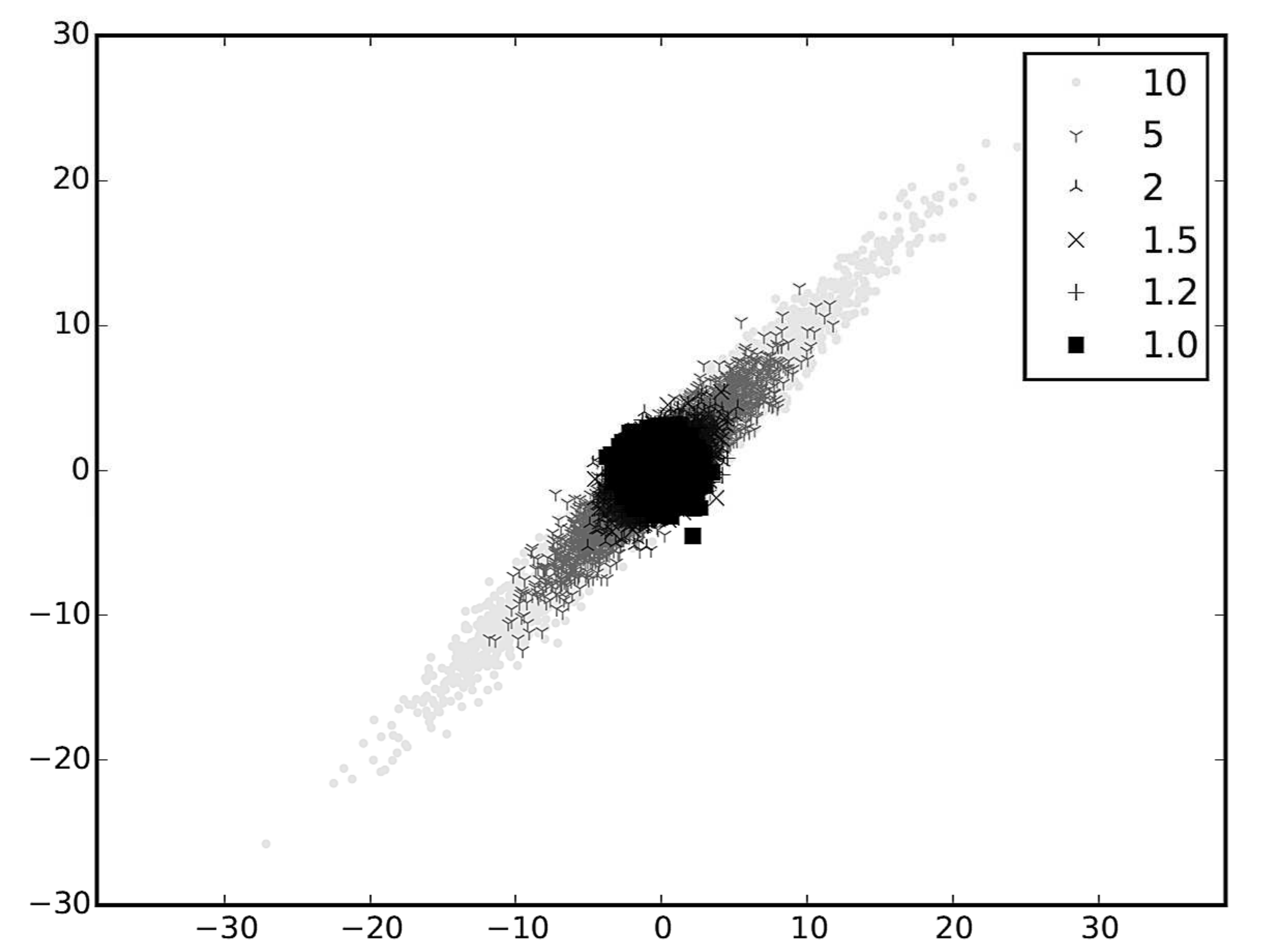}}
  \\
  \subfloat[$\mathbb{U}(\rho)$ errors]{\includegraphics[scale=0.15]{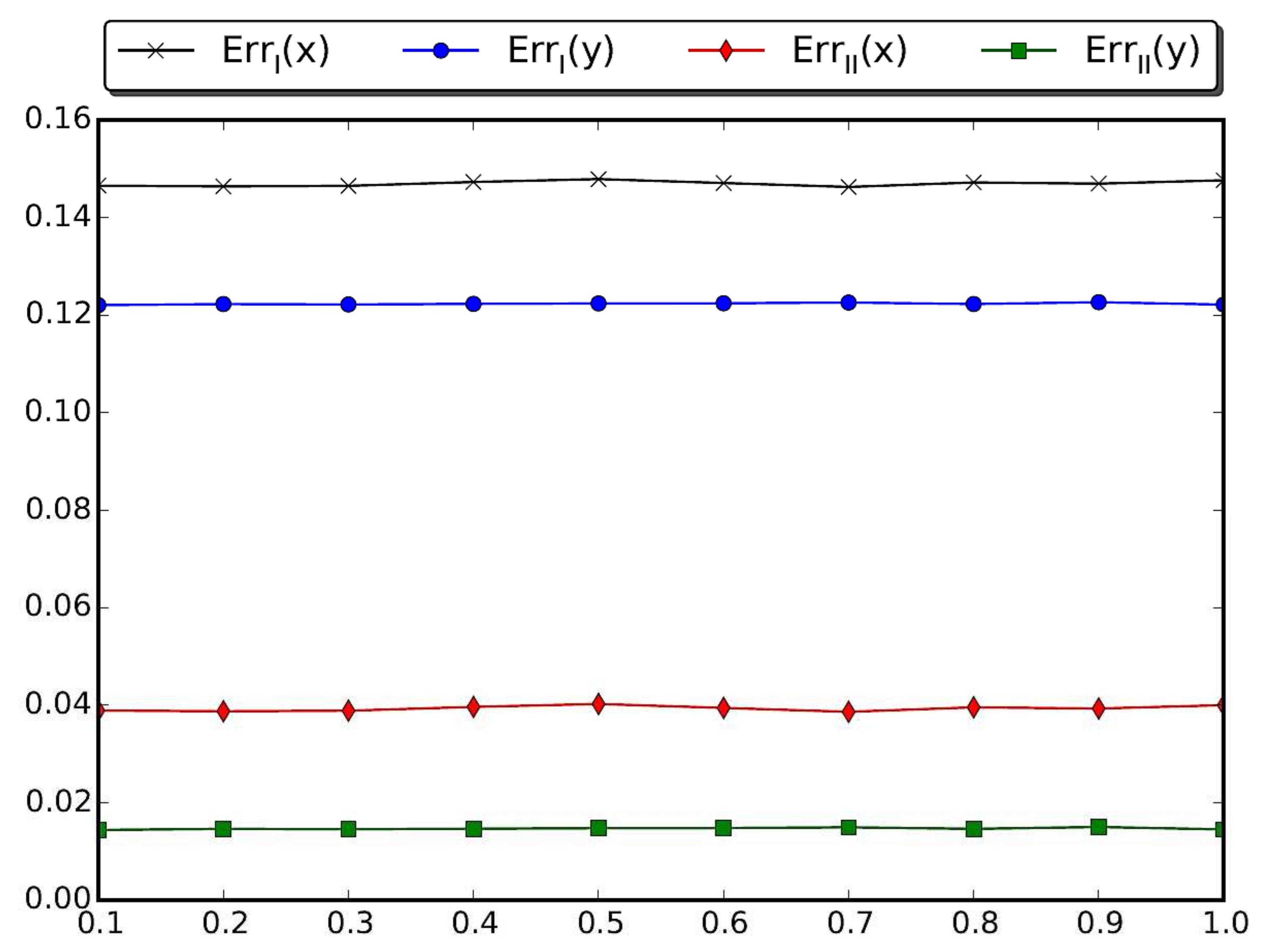}}
  \subfloat[$\mathbb{G}(\theta,2)$ errors]{\includegraphics[scale=0.15]{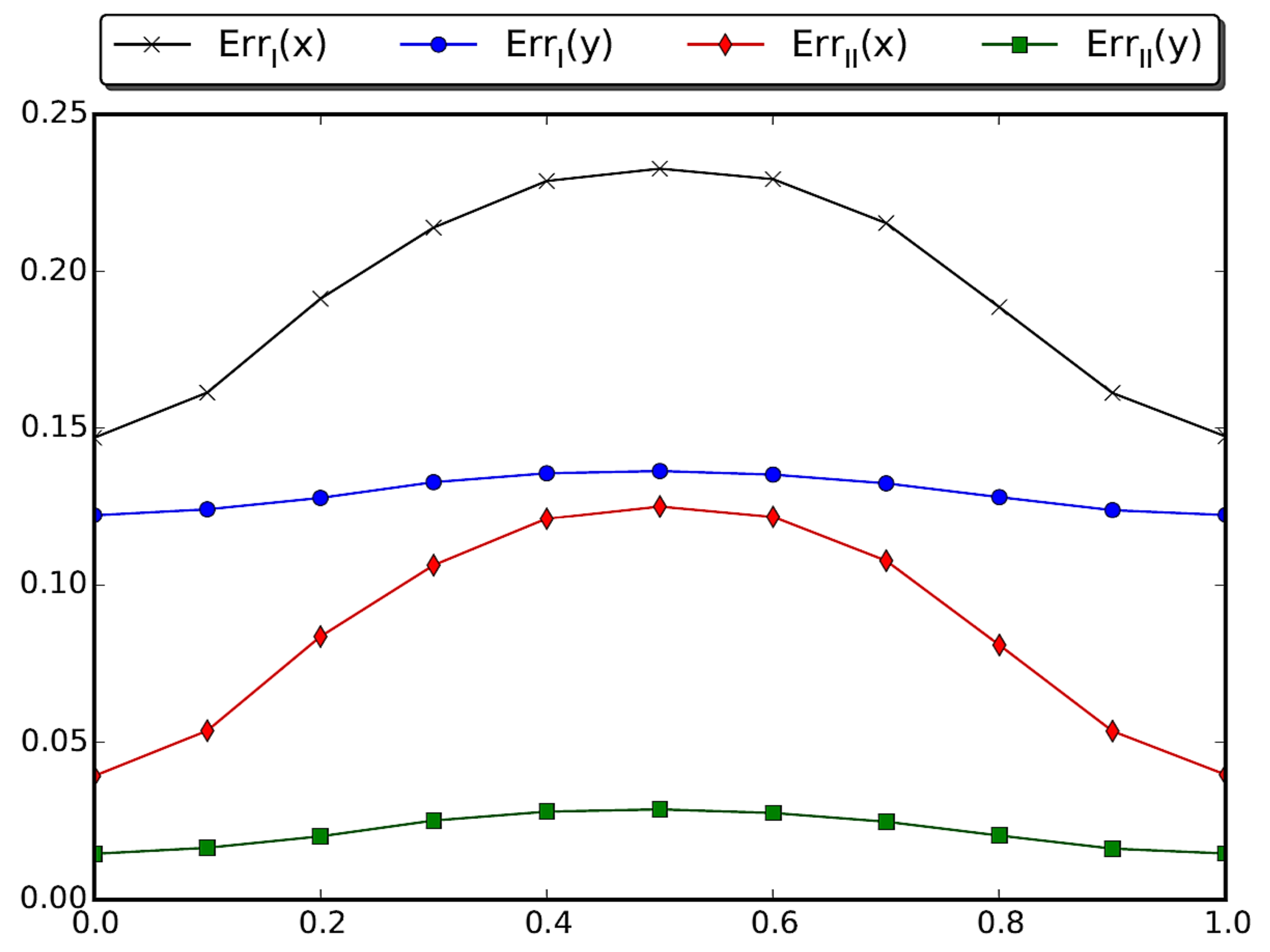}}
  \subfloat[$\mathbb{G}(\theta,5)$ errors]{\includegraphics[scale=0.15]{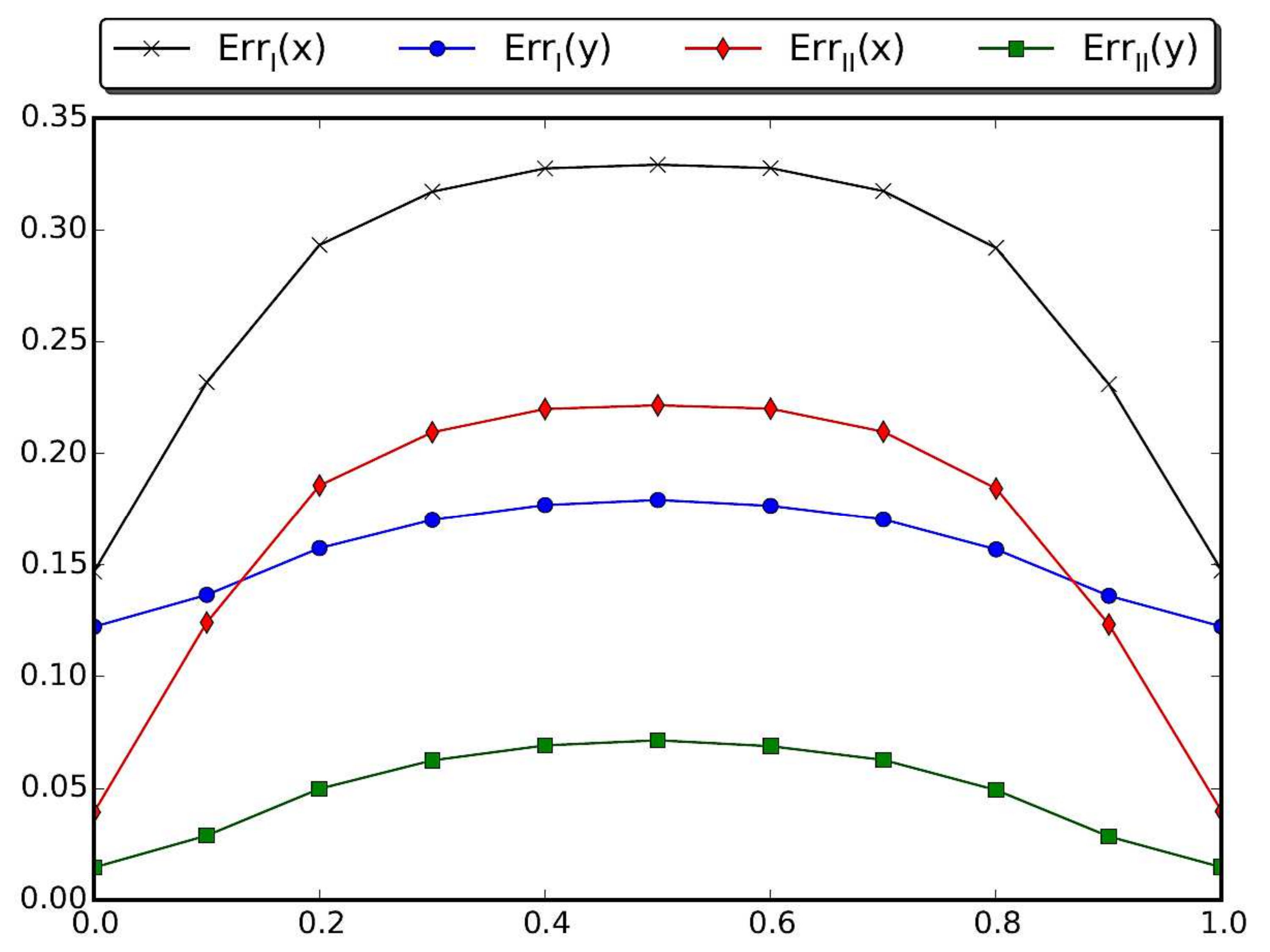}}
  \subfloat[$\mathbb{G}(\frac{\pi}{2},\phi)$ errors]{\includegraphics[scale=0.15]{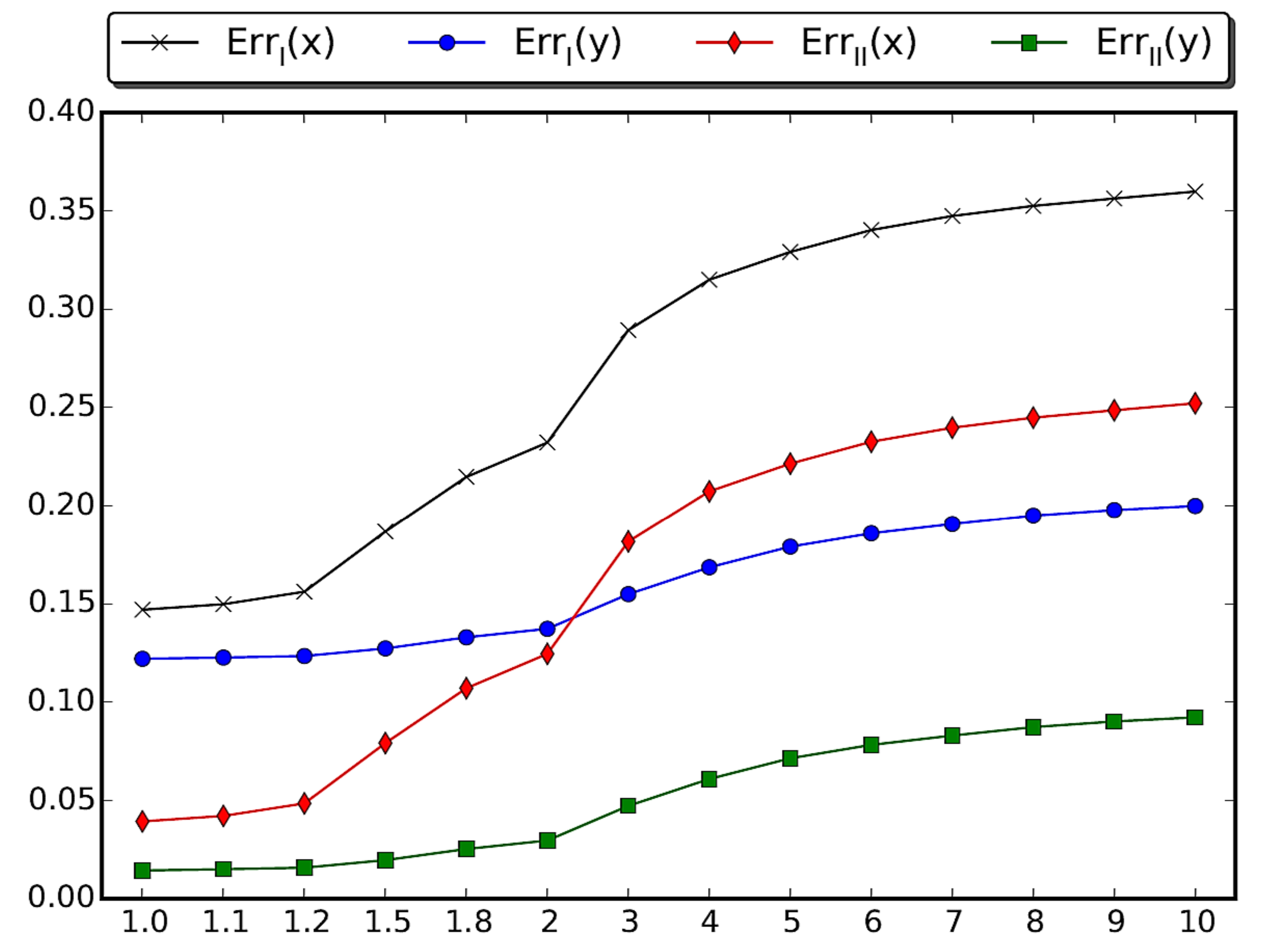}}
\caption{Error ratios on various sampling schemes over an $8\times8$ layout. Both $Err_{I}$ and $Err_{II}$ are averaged over 1000 sets of layout samples in (e)-(h), and the first 50 sets are scatter-plotted in (a)-(d). $\theta$ is plotted as multiples of $\pi/2$ in (b)(c)(f)(g).}
\label{fig-err2}
\end{figure*}

On the test over $\mathbb{U}(\rho)$, the sampled points uniformly distributed within a unit square are being shrunk on their $x$-values by the ratio $\rho$ (Equation~\ref{eq-uni}). When the SD algorithm starts the splitting in the $y$-dimension first, $Err_{I}(y)<Err_{I}(x)$ and $Err_{II}(y)<Err_{II}(x)$, as in Figure~\ref{fig-err2} (e). $\rho$ does not impact on all error metrics. If opting to start the SD algorithm in the $x$-dimension first, the resulting error ratios in the $x$- and the $y$-dimension would be swapped. (Plot is omitted)

In Figure~\ref{fig-err2} (f), we evaluate the error ratios of $\mathbb{G}(\theta,2)$. Under this sampling scheme, the error ratios vary with $\theta$. With a fixed $\phi$, both $Err_{I}$ and $Err_{II}$ have maximum values at $\theta=\pi/4$. However, the error ratio in the $x$-dimension (or the dimension not being chosen to start the split) is more sensitive to $\theta$ than that in the $y$-dimension, for both types of error. In comparison, when $\phi$ is fixed at a larger value, the Gaussian sample distribution will be narrower in shape. The narrower shape incurs higher error ratios, especially when $\theta$ is around $\pi/4$. 

Based on Figure~\ref{fig-err2} (f)(g), we find that Gaussian samples from Equation~\ref{eq-gau} with larger $\phi$ and $\theta$ around $\pi/4$ incur higher error ratios. To observe further in this type of Gaussian samples, we plotted $\mathbb{G}(\frac{\pi}{2},\phi)$ in Figures~\ref{fig-err2} (h). As expected, all error ratios increase as $\phi$ increases, where the error ratios on the $x$-dimension are more sensitive to $\phi$. The errors converge as $\phi$ becomes larger.

\begin{figure*}[htp]
  \centering
  \subfloat[splitting strategies]{\includegraphics[scale=0.20]{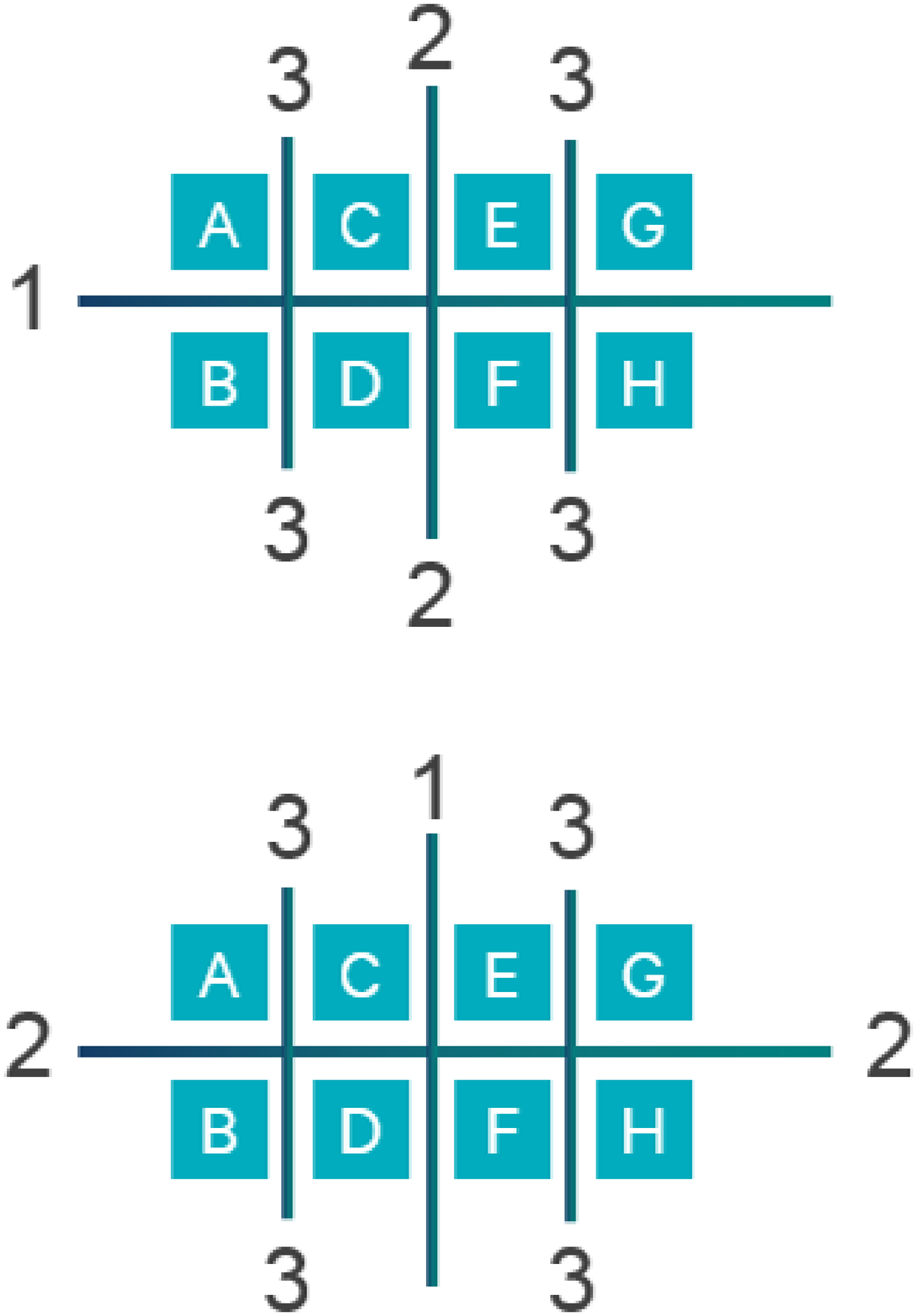}}\quad 
  \subfloat[errors on first splits]{\includegraphics[scale=0.20]{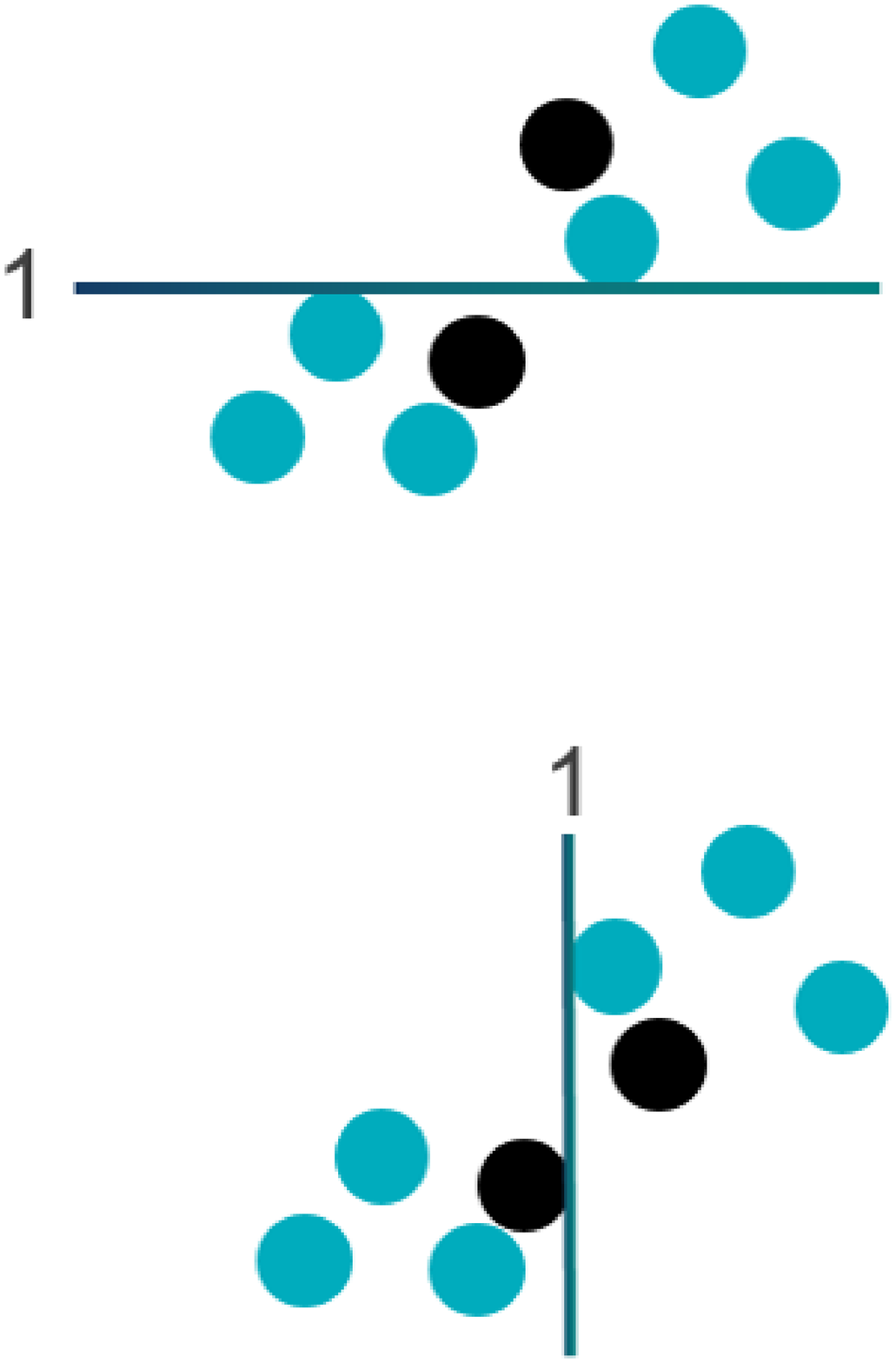}}\quad 
  \subfloat[$\mathbb{G}(\theta,2)$ errors]{\includegraphics[scale=0.24]{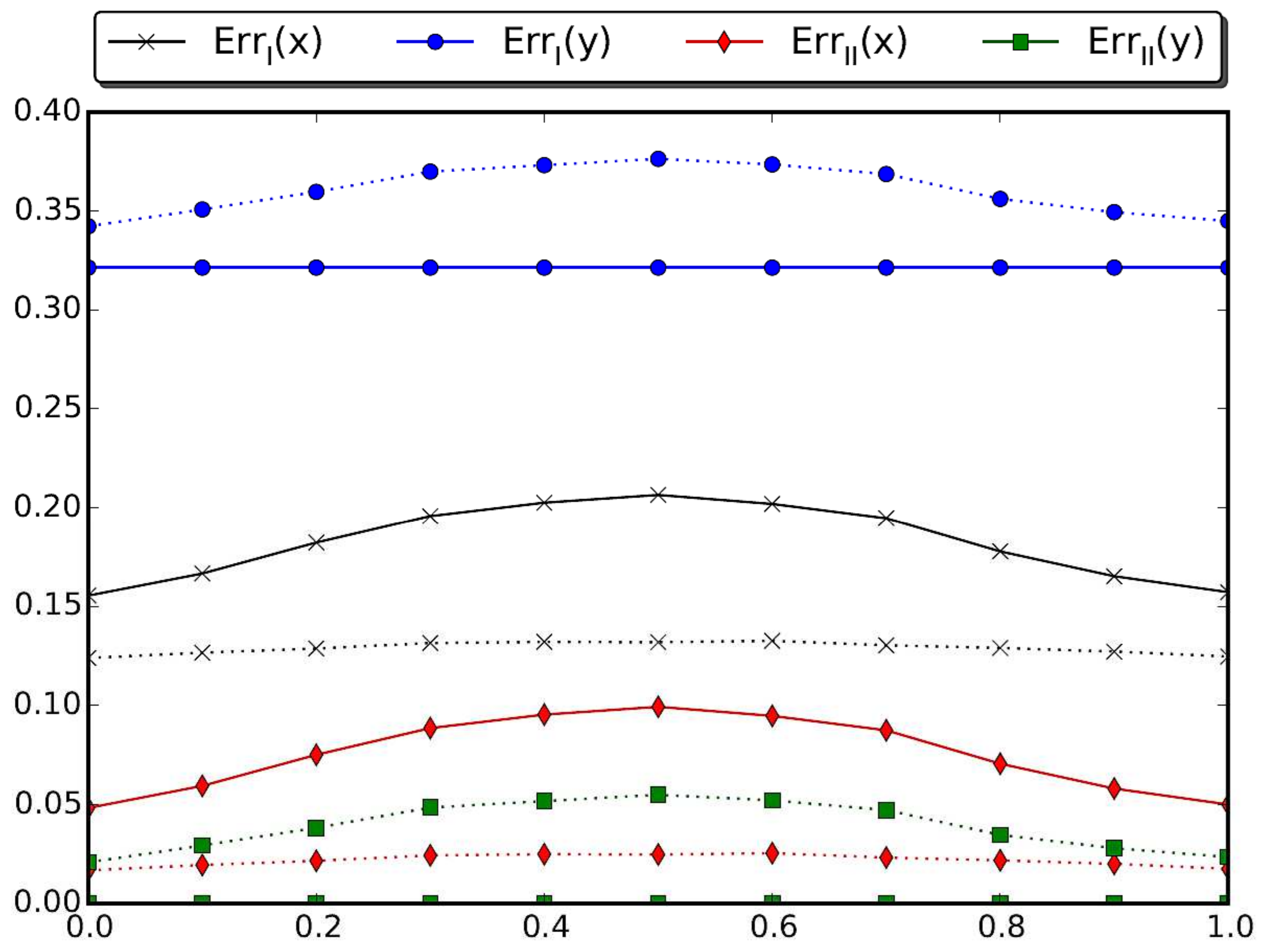}}
\caption{The split-diffuse algorithm over a $2\times4$ layout. The upper part of (a) shows an iterative splitting strategy that starts in the $y$-dimension; the lower part of (a) adopts the greedy splitting of the generalized SD algorithm. The errors incurred by the first split on both cases over the same set of input points are shown in (b): the black points are the example pairs where the errors occur. The error ratios over $1000$ $\mathbb{G}(\theta,2)$ input sample sets are plotted in (c) with the iterative splitting using the solid lines. $\theta$ is plotted as multiples of $\pi$.}
\label{fig-SD42}
\end{figure*}

\subsection{Error Bounds}\label{subsec-eb}

As mentioned above, each pair of points has one constraint on each dimension within the space. In 2D space, there are totally $n(n-1)$ constraints, where $n$ is the number of points in $\{p\}$. We start from inspecting the constraints toward one single point. In Figure~\ref{fig-SD} (c), the split path of a particular (black) data point is illustrated over a $4\times4$ layout. The split $\#1$ ensures that the 8 blue points to the left of the split $\#1$ line are mapped to the left of the mapped black point. So $8$ constraints in the $x$-dimension is met. Similarly, the split $\#2$ line guarantees $4$ constraints in the $y$-dimension is met, and so on. Upon the final split relating to the black point, that is when it is separated from all other points, a total of $15$ constraint on the black point will be met. The same applies to all other points.

Therefore, the SD algorithm satisfies ${n \choose 2}$ constraints in the 2D space. In other words, $Err_{I}$ is bounded by $1/2$. As a particular example, if the input data contains $4$ points along the diagonal, the SD algorithm meets $7$ of the $12$ constraints over the $2\times2$ layout ($3$ in $x$-dimension and $4$ in $y$-dimension if we split in the $y$-dimension first). A more general bound is as follows.
\begin{theorem}\label{thm-err}
In a space $\mathcal{L}$ of $k$ dimensions, the SD algorithm satisfies $1/k$ of the type I constraints measured by $Err_{I}$ in Equation~\ref{eq-I}. That is,
\begin{equation} 
Err_{I}\leq\frac{k-1}{k}
\end{equation}

\end{theorem}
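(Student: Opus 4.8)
The plan is to generalize the single-point counting argument already given for the $4\times4$ layout to an arbitrary $k$-dimensional grid, charging one guaranteed-satisfied constraint to each pair of points. View the recursion of Algorithm~\ref{alg-SD} as a binary tree: every recursive call receives a subset of $\{p\}$, selects the currently widest dimension $a$, sorts the subset along $a$, and partitions it into a low half of size $m$ and a high half comprising the remaining points. The leaves are singletons, so for any two distinct points $p_i,p_j$ there is a unique node --- their lowest common ancestor in this tree --- at which they are first routed into different halves. Let $a^\ast=a^\ast(i,j)$ be the splitting dimension used at that node. I would show that the type~I constraint between $p_i$ and $p_j$ in dimension $a^\ast$ is necessarily met, so that summing this one guaranteed constraint over all ${n \choose 2}$ pairs lower-bounds the number of satisfied constraints by ${n \choose 2}$.

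The key step is to verify that this one constraint really survives to the final placement. When the separating split occurs, the subset is sorted along $a^\ast$ before partitioning, so every point of the low half has an $a^\ast$-coordinate in $\mathcal{L}$ no larger than every point of the high half; generically these are strict, and $p_i,p_j$ land on opposite sides in the correct order. The allocation vector $\vec{\textbf{c}}$ then confines the low half to the index interval $[c_{a^\ast},c_{a^\ast}+w_l)$ and the high half to a disjoint, strictly higher interval in dimension $a^\ast$. Crucially, all deeper recursive splits acting on either half only redistribute points within that half's own interval and never touch the other half's interval, so the final values $\mathbb{S}(p_i)$ and $\mathbb{S}(p_j)$ remain correctly ordered in dimension $a^\ast$ no matter what happens below. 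Hence the separating split permanently satisfies one of the $k$ constraints attached to the pair $(p_i,p_j)$.

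Counting then finishes the argument. Each of the ${n \choose 2}$ pairs contributes one satisfied type~I constraint, namely the one attached to that pair in its separating dimension $a^\ast(i,j)$; since these guaranteed constraints are indexed by distinct pairs, they are distinct. Thus at least ${n \choose 2}$ of the $k{n \choose 2}$ type~I constraints are met, the number violated is at most $(k-1){n \choose 2}$, and dividing by $k{n \choose 2}$ gives $Err_{I}\le(k-1)/k$, which specializes to the $1/2$ bound at $k=2$.

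I expect the main obstacle to be the persistence claim in the second paragraph: one must argue carefully that the disjointness of the index intervals assigned along dimension $a^\ast$ is preserved under all subsequent recursion, so that a constraint satisfied at the separating split can never be undone later --- this is what upgrades a local guarantee into a global one. A secondary technical point is the treatment of ties along $a^\ast$ (equal coordinates), which do not arise for the continuous samplings $\mathbb{U}$ and $\mathbb{G}$ but would need the loosened type~II reading to be counted cleanly; for the strict type~I bound the generic tie-free case suffices.
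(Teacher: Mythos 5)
Your proposal is correct and takes essentially the same approach as the paper's proof: each pair of points is charged the single type~I constraint in the dimension of the split that first separates them, giving ${n \choose 2}$ satisfied constraints out of $k{n \choose 2}$ and hence $Err_{I}\leq (k-1)/k$. The paper's version is terser --- it counts at least $(n-1)$ satisfied constraints per point rather than one per pair, and leaves implicit the persistence argument you spell out --- but the underlying idea is identical.
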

 
\begin{proof}[Proof]
At the end of the SD algorithm, each data point $p$ is split into a cell that only contains $p$. The splits toward the point $p$ satisfy at least $(n-1)$ type I constraints involving $p$, across all dimensions of $\mathcal{L}$. Among all the points in $\{p\}$, a total of ${n \choose 2}$ type I constraints out of all $\left[{n \choose 2} \cdot k\right]$ type I constraints over all dimensions of $\mathcal{L}$ are met.
\end{proof}

On the trivial case of $k=1$, the data points will be evenly placed onto a one-dimensional space. There are ${n \choose 2}$ type I constraints, and all of them are met. The error bound become looser with larger $k$, which can be seen as another curse of dimensionality. Fortunately, we are only interested in visualizing the data points in a space having $k\leq 3$.

\subsection{Error on Non-square Layout}

On the general SD algorithm, one could argue on the dimension to start the first split. Consider the $2\times4$ layout in Figure~\ref{fig-SD42} (a). The upper part of Figure~\ref{fig-SD42} (a) illustrates the strategy to split iteratively over $x$- and $y$-dimensions while starting with the split in the $y$-dimension as default. The lower part of Figure~\ref{fig-SD42} (a) adopts greedy fashion of the SD algorithm with $\argmax_a{(g_a)}$ breaks tie in favor of the $y$-dimension.

The error bound in Theorem~\ref{thm-err} holds on both splitting strategies, since both of them ensure the splitting to end with every point being placed into a single-point cell. We exercise both strategies in Figure~\ref{fig-SD42} (a) on an example set of $8$ data points as shown in Figure~\ref{fig-SD42} (b). Inspecting the errors incurred by the first split, the iterative strategy in the upper part of Figure~\ref{fig-SD42} (b) guarantees the $y$-dimension constraints (Equation~\ref{eq-I}) between the group above split $\#1$ and the group below will be met. However, as the consequence of this split, the black points are eventually placed into the ``A'' and ``H'' positions. This placement violates the $x$-dimension constraint by four grids. The first split of the greedy splitting in the lower Figure~\ref{fig-SD42} (b), on the other hand, leads the black points into the ``C'' and ``F'' positions. The consequence $y$-dimension constraint violation has a displacement of two grids. Although both above cases count for one error in $Err_{I}$, the chance of incurring error with larger displacement makes the iterative splitting strategy less favored than the greedy splitting strategy of Algorithm~\ref{alg-SD}.

The error ratio comparison between both strategies over $\mathbb{G}(\theta,2)$ input samples are plotted in Figure~\ref{fig-SD42} (c). The overall $\mathbb{G}(\pi/4,2)$ $Err_{I}$ for the iterative splitting strategy is $0.2640$, compared to $0.2541$ from the greedy strategy. $Err_{II}$ for the these two strategies over the same setup are $0.0497$ and $0.0399$, correspondingly. From Figure~\ref{fig-SD42} (c) one can tell that the area between the dotted blue line and the solid blue line is smaller than the area between the dotted and solid black lines. The difference between these two areas reflects the $Err_{I}$ gain/improvement for choosing the greedy splitting over the iterative splitting. Without considering the amount of displacement on the error, the greedy splitting shows better $Err_{I}$ than the iterative splitting, with the maximum gain of $0.0098$ happening at $\theta=\pi/4$ of $\mathbb{G}(\theta,2)$. Similarly, greedy splitting also provides better $Err_{II}$ across all $\theta$.

Therefore, via actively looking for the dimension that has the most allocation spots to split, the greedy splitting strategy avoids having larger displacement. In addition, the better error ratios also led us to use this strategy in Algorithm~\ref{alg-SD}. There could possibly exist a better strategy toward a specific displacement metric, to be explored by further studies.

\subsection{Relationship to the $k$-d Tree}

The SD algorithm is similar to the generation algorithm of the $k$-d tree\cite{Bentley75} with some major differences
\begin{enumerate}
  \item The $k$-d tree creates a tree; the SD algorithm creates a map.
  \item The $k$-d tree splits in each dimension in turns; not always true for the SD algorithm.
  \item The $k$-d tree stores the median point at the split node; the SD algorithm divides points in two groups at each split.
  \item The $k$-d tree is used for fast similarity search; the SD algorithm places points evenly-spaced.
  \item The $k$-d tree is evaluated by the scale and speed of search; we evaluate the SD algorithm, or similar algorithms that map data points uniformly, by the number of constraints it satisfies.
\end{enumerate}

While $k$-d tree can be used on high dimensional data points and suffers from the curse of dimensionality, the SD algorithm is usually performed on the 2D or 3D space. The complexity of the SD algorithm is comparable to that of the $k$-d tree. When using the $\mathcal{O}(n)$ median of medians algorithm \cite{Blum73} in selecting the median at each depth level, the worst-case complexity is $\mathcal{O}(n\log n)$.

\section{Conclusion and Future Work}

In this paper, we introduce the topic grids to organize and visualize massive amount of behavioral data. Once trained, the same set of topic grids can be used to visualize different metrics on different targets over different periods of time, and compare the difference. Human can perceive the difference easily via the location of the integer-indexed grids. Topical details can be further rendered when the user interacts with the topic grids. Although we use the squared layout in the paper, the technique can produce layouts in rectangles, cubes or cuboids. The goal behind the technique is to utilize the visualization and interaction space as much as we can, for making topical comparison.

We present the use case to analyze the behavioral risk over the network activities. Beyond risk, the technique can be used to analyze topical trend, expertise, learning rates, or other metrics of interest. It can be applied to other domains beyond cyber security. Working with the e-commerce logs, the topic grids can be used to analyze the shopping behavior and preference of the customers, or the inventory of the vendor, over peer groups defined on location, age, or other attributes. Applications can further include credit card transactions, customer complaints, reviews of the shared economy companies, or any data source having free-form text from which we can extract topics.

\subsection{Conditioned Topic Placement}

The SD algorithm delegates the maintenance of the $\mathcal{H}$-to-$\mathcal{L}$ point-wise relationship to the existing word embedding algorithms of choice. There could be a possible way to map and maintain the $\mathcal{H}$-to-$\mathcal{L}$ point-wise relationship while enforcing the uniform spacing in $\mathcal{L}$. Such algorithm, when available, is a more direct algorithm to embed words uniformly. The desired aspect can be measured as displacement in the uniform $\mathcal{L}$ space, while the algorithm optimizes this metric. On a specific situation, it is desirable to keep the clusters in $\mathcal{H}$ intact after mapped to the uniform $\mathcal{L}$ space. Currently the SD algorithm cannot guarantee such property.

\subsection{Topic Drift over Time}

Over time, the topics on the input corpus may drift. There could be new topics while old topics fade away. On the assumption that most of the topics are still the same and having similar relationship to each other, it will be desirable to generate a new SD map $\mathbb{S}'(p)$ which looks similar to the old $\mathbb{S}(p)$.

\subsection{Application to Structured Data}

It is also possible to apply the topic grids to the structured data, on which an arbitrary clustering algorithm can generate cluster centers. The data points are then organized into these cluster centers, the same way we use the topic to represent the log entries related to it.

\bibliographystyle{IEEEtran}
\bibliography{IEEEabrv,jsu2016}

\end{document}